\documentclass[11pt]{article}

\usepackage[final]{acl}

\usepackage{times}
\usepackage{latexsym}
\usepackage[T1]{fontenc}

\usepackage[utf8]{inputenc}

\usepackage{microtype}
\usepackage{listings}
\usepackage{inconsolata}
\usepackage{CJKutf8}
\usepackage[encapsulated]{CJK}
\usepackage{graphicx}
\usepackage{subcaption}
\usepackage{amsmath}
\usepackage{amssymb}
\usepackage{booktabs} 
\usepackage{multirow} 
\usepackage{diagbox} 
\usepackage{xcolor}
\usepackage{tabularx}
\usepackage{amsthm}
\usepackage{capt-of}

\newtheorem{corollary}{Corollary}
\usepackage[most]{tcolorbox}
\definecolor{myrock}{HTML}{6202ff}    
\definecolor{mypaper}{HTML}{ff5722} 
\definecolor{myscissors}{HTML}{b9f3e9}    
\definecolor{mygreen}{HTML}{007f00}

\definecolor{string1}{HTML}{9c0293}   
\definecolor{string2}{HTML}{ddeefd} 
\definecolor{string3}{HTML}{ffcdd2}   

\definecolor{down}{HTML}{228B22} 
\definecolor{up}{HTML}{B22222}   
\usepackage[table]{xcolor}
\usepackage[most]{tcolorbox} 
\usepackage{titletoc}
\usepackage[title,toc]{appendix}

\usepackage{amsthm}

\newtheorem{proposition}{Proposition}
\theoremstyle{definition}
\newtheorem{assumption}{Assumption}

\tcbset{enhanced} 

\newtcolorbox{exercisebox}[1][]{ 
  colback=gray!5,            
  colframe=black!75,         
  coltitle=white,            
  fonttitle=\bfseries,       
  title=#1,                  
  boxrule=1pt,               
  arc=3pt,                   
  left=10pt,                 
  top=8pt,                   
  bottom=8pt,                
  before skip=10pt,          
  after skip=10pt,           
}

\usepackage{multicol}
\usepackage{newfloat}
\usepackage{listings}
\DeclareCaptionStyle{ruled}{labelfont=normalfont,labelsep=colon,strut=off} 
\title{Mutual Debiasing via Dual‑Seed Comparison for Probabilistic Sampling in Large Language Models}

\author{
\textbf{Zihao Guo}$^{1}$,
\textbf{Hongtao Lv}$^{1}$\thanks{Corresponding authors.},
\textbf{Chaoli Zhang}$^{2}$,
\textbf{Laiguo Yin}$^{1}$,
\textbf{Lei Liu}$^{1}$,
\textbf{Yonghui Xu}$^{1}$,
\textbf{Lizhen Cui}$^{1}$\footnotemark[1] \\
$^{1}$ School of Software, Shandong University, China \\
$^{2}$ School of Computer Science and Technology, Zhejiang Normal University, China\\
\texttt{\{zihaog, lgyin\}@mail.sdu.edu.cn}, \\
\texttt{\{lht, l.liu, xuyonghui, clz\}@sdu.edu.cn},\quad
\texttt{chaolizcl@zjnu.edu.cn}
}

\begin{document}
\maketitle
\begin{abstract}
Although Large Language Models (LLMs) demonstrate remarkable capabilities in reasoning and decision-making, high-fidelity probabilistic sampling remains a persistent challenge. When generating random variables, LLMs consistently exhibit systematic biases that warp the target probability distributions. Current approaches often rely on a single, self-generated seed, which inherits model-specific biases. To overcome this vulnerability, we introduce Dual-Seed Comparison (DSC), a transparent, tool-free protocol that utilizes two independent LLM-generated seeds to neutralize bias. DSC compares the character-level ordinal values of the two seeds to construct a bit sequence, converts and normalizes this sequence into a pseudo-uniform variate, and then maps the variate to the target distribution through the inverse cumulative distribution function (CDF). Empirical results show that DSC substantially outperforms existing methods across 96\% of evaluated settings. Beyond direct sampling, task-adapted variants based on the DSC comparison operator improve distributional control in MCQ generation and attribute-constrained text-to-image prompting.

\end{abstract}

\section{Introduction}

The expanding role of Large Language Models (LLMs) in agent-based simulations \cite{park2023generative} and synthetic data generation \cite{li-etal-2023-synthetic} has elevated rigorous probability sampling from a theoretical benchmark to a strict operational requirement \cite{zhao2026baddice}. Empirical evidence confirms that LLMs systematically fail at basic stochastic tasks \cite{randnumber,playdice,randomisrandom,guo-etal-2025-illusion}, and that their random number generation fails standard statistical test suites \cite{karanjai2025evaluating}. Crucially, this deficit is one of execution rather than knowledge: models can accurately describe target distributions \cite{DistributionalAlignmentof} and derive optimal mixed strategies \cite{effectof}, yet fail to act on them stochastically. This persistent ``cognition--behavior gap'' \cite{guo-etal-2025-illusion} across model scales highlights a fundamental structural limitation in autoregressive generation.

Given this structural limitation, existing approaches have primarily sought to bypass rather than resolve it. The dominant strategy externalizes the stochastic process entirely, either by prompting LLMs to invoke external libraries like \texttt{numpy.random} or by outsourcing the simulation to code-execution environments \cite{silva2024large}. This mirrors a recurring pattern in LLM reasoning research: when models struggle with arithmetic, a natural response is to offload computation to external tools \cite{10.5555/3618408.3618843, 10.5555/3666122.3669119}. However, just as recent literature insists that mathematical computation is a core cognitive skill rather than a peripheral task to be offloaded \cite{Dawid_2024, lewkowycz2022}, we argue that stochastic sampling demands similar scrutiny. If an LLM must call an external pseudo-random number generator (PRNG) script to draw a basic uniform variate, it has merely memorized the vocabulary of probability. The underlying functional competence remains absent \cite{MAHOWALD2024517}, leaving the fundamental generative limitation masked rather than resolved.

\begin{figure*}[t]
\centering
\includegraphics[width=\textwidth]{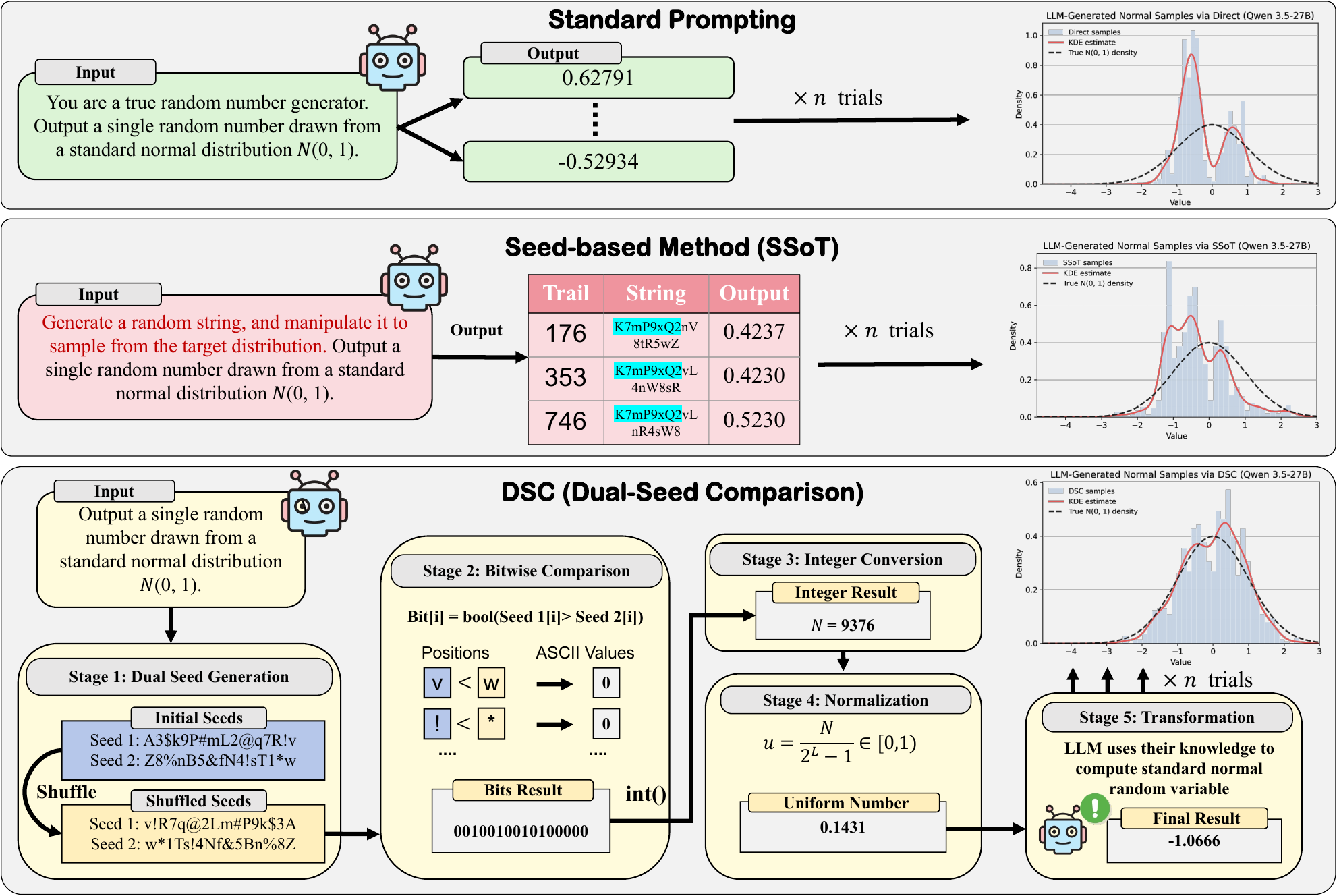}
\caption{Overview of LLM-based continuous sampling methodologies. (Top) Standard direct prompting fails to capture the continuous density, resulting in severe structural deviations from the target $\mathcal{N}(0,1)$. (Middle) Single-seed approaches like SSoT propagate the inherent character-level biases of the LLM, leading to skewed sampling distributions. (Bottom) The proposed Dual-Seed Comparison (DSC) pipeline. See Section~\ref{sec:method} for details. Graphs show the resulting empirical distributions, obtained using Qwen3.5-27B.}
\label{fig:main}
\end{figure*}

To bypass this dependence on external environments, a more intrinsic line of work seeks to extract randomness directly from outputs produced by the LLM itself. Specifically, String Seed of Thought \cite[SSoT;][]{misaki2025stringseedthoughtprompting} instructs the LLM to generate a random string as an entropy source and autonomously extract a categorical sample. While SSoT alleviates the above problem to some extent, it has two notable limitations. First, it relies entirely on the initial string being an unbiased character source. However, as illustrated in Figure~\ref{fig:main}, this assumption does not hold in practice. When testing SSoT on Qwen3.5-27B for sampling from an exponential distribution, the LLM repeatedly generates strings sharing the same prefix, such as ``K7mP9xQ2'' across multiple trials, even though these strings are expected to serve as independent seeds. Because SSoT maps these strings directly into samples, the repeated textual pattern leads to correlated and biased outputs, which in turn distort the empirical sample distribution away from the target distribution. We empirically expose this character-level bias in Section~\ref{sec:llmsfail}.  Second, its autonomous extraction logic functions as a black box, rendering the seed-to-sample mapping fundamentally opaque.

We bridge these gaps by introducing \textbf{Dual-Seed Comparison (DSC)}, a transparent protocol for probabilistic sampling. Instead of treating a single LLM-generated string as an unbiased character source, DSC prompts the model to generate two independent strings and compares their characters position by position. As shown in the bottom panel of Figure~\ref{fig:main}, as long as the two generated strings provide independent seeds, the ordinal comparison at each position yields a randomized binary outcome driven by their relative ordering, rather than by the absolute character preferences of the model. The resulting comparison-bit sequence is normalized into a pseudo-uniform variate $u \in [0,1)$, which is then mapped to the target continuous distribution through the inverse cumulative distribution function (CDF). This end-to-end process operates in a single inference call and produces a fully auditable trail of intermediate steps.

Our contributions are as follows:
\begin{itemize}
\item We empirically demonstrate that LLM-generated random strings exhibit systematic character-level biases, establishing that single-seed entropy sources are fundamentally unreliable for probabilistic sampling.
\item We propose DSC, a prompt-based method enabling LLMs to sample from probability distributions without external tools, via a transparent, multi-stage arithmetic pipeline.
\item Evaluated across five models and five distributions, DSC achieves the lowest KS statistic in 24 of 25 conditions. Beyond distribution sampling, DSC improves distribution fidelity in MCQ answer-position control and attribute-constrained prompt generation.
\end{itemize}

\section{Related Work}
\label{sec:back}

The inability of LLMs to produce high-quality random outputs has been documented across a widening range of tasks. Early controlled evaluations revealed systematic deviations from target distributions in uniform integer generation \cite{randnumber}, dice rolling \cite{playdice}, and coin flipping \cite{randomisrandom}—with GPT-4, for instance, sacrificing independence to maintain the appearance of uniformity \cite{playdice}. Subsequent work broadened the scope: \citet{guo-etal-2025-illusion} evaluated 20 LLMs in Rock-Paper-Scissors and found systematic deviations from the uniform mixed-strategy equilibrium. Their ablations showed that neither model scaling nor temperature tuning eliminated the broader randomness deficit. \citet{karanjai2025evaluating} applied the full NIST randomness test suite and reported that the best-performing LLM passed fewer than one-third of the tests, compared to over 87\% for Python pseudo-random generators.

These failures trace to structural properties of the autoregressive generation process. LLM output probabilities in numeric contexts are poorly calibrated, with biases driven by word identity, presentation order, and corpus frequency \cite{CalibratedinNumericContexts, pezeshkpour2023largelanguagemodelssensitivity, premiseOrderMatters}. The deficit, moreover, is one of execution rather than knowledge: models can accurately describe a target distribution yet fail to sample from it \cite{DistributionalAlignmentof}, and game-playing agents can derive the correct mixed strategy, but cannot act on it stochastically  \cite{effectof}. 

Recent studies have explored different mitigations for LLM sampling failures. \citet{silva2024large} improve mixed-strategy gameplay by granting models code-execution access, allowing randomization and calculation to be delegated to executable code. In a different direction, \citet{EnoughCoinFlips} use biased coin-flip sequences as in-context evidence to test whether LLM predictive probabilities update in a Bayesian manner; they find that models often have miscalibrated priors but can broadly follow Bayesian posterior updates after sufficient demonstrations. These results clarify what LLMs can recover through tool use or in-context evidence, but they do not directly provide an intrinsic sampling mechanism. Taking a more intrinsic approach, String Seed of Thought \cite[SSoT;][]{misaki2025stringseedthoughtprompting} prompts the LLM to first generate a random string to accumulate sufficient entropy, and then extract this randomness by manipulating the string to derive a final categorical sample, thereby preserving output diversity while adhering to specific constraints. SSoT provides theoretical guarantees that the total variation distance decreases exponentially with string length and, empirically, enables reasoning-capable models to approach PRNG-level fidelity on discrete benchmarks. Nevertheless, the intermediate steps of this string-to-sample mapping remain opaque and untrustworthy. 

A pervasive limitation in the current literature is its near-exclusive focus on discrete tasks, such as dice rolls, coin flips, and rock-paper-scissors. Recent diagnostic work by \citet{zhao2026baddice} expanded this evaluation to continuous domains, auditing 11 models across 15 diverse distributions. Their findings highlight a severe deficit: under independent stateless requests, 10 of the 11 models failed across every distribution tested. However, no existing methodology offers a constructive solution for this critical setting. Our work bridges this gap by introducing DSC. DSC converts two LLM-generated character strings into a pseudo-uniform variate on $[0,\, 1)$, and then maps it to the target continuous distribution through the quantile function. 

\section{Preliminaries}
\label{sec:prelim}

\subsection{Statistical Tests for Distributional Fidelity}

We evaluate the distributional fidelity of LLMs by comparing the empirical distribution of their generated samples with the target distribution $F_0$ (e.g., Uniform $\mathcal{U}(0,1)$). For a set of $n$ samples $\{x_1,\ldots,x_n\}$ produced by independent stateless queries to a model $\mathcal{M}$, we define the empirical CDF as
\begin{equation}
    F_n(x)=\frac{1}{n}\sum_{i=1}^{n}\mathbf{1}[x_i\le x].
\end{equation}

We then use two goodness-of-fit measures to quantify the discrepancy between $F_n$ and $F_0$.

\paragraph{Kolmogorov--Smirnov (KS) test.}
The KS statistic measures the supremum deviation from the target CDF:
\begin{equation}
  D_n = \sup_{x} \bigl|F_n(x) - F_0(x)\bigr|.
\end{equation}
A smaller $D_n$ indicates closer agreement with the target distribution $F_0$, with $D_n=0$ corresponding to an exact match. We report $D_n$ directly as our primary fidelity metric, rather than converting it into a $p$-value. This is because our goal is to quantify and compare the magnitude of distributional deviation across methods. In contrast, the KS $p$-value answers a binary hypothesis-testing question and becomes increasingly sensitive as $n$ grows, so even small deviations can lead to rejection without clearly indicating which method is closer to the target.

\paragraph{Wasserstein distance.}
The first-order Wasserstein distance (earth mover's distance) between
the empirical distribution and $F_0$ is
\begin{equation}
  W_1 = \int_{-\infty}^{\infty} \bigl|F_n(x) - F_0(x)\bigr|\,
  \mathrm{d}x.
\end{equation}
While $D_n$ evaluates fidelity based on the maximum pointwise divergence, $W_1$ integrates discrepancies across the entire distributional support. $W_1$ exhibits heightened sensitivity to systematic, low-magnitude biases that $D_n$ inherently overlooks. As with $D_n$, a lower $W_1$ reflects closer adherence to $F_0$, with $W_1 = 0$ strictly signifying an exact match between the empirical and target distributions.

\paragraph{Chi-square ($\chi^2$) test.}
For discrete sampling tasks, we additionally use the chi-square goodness-of-fit test to measure whether the empirical category frequencies match the target probabilities. 
Let $\mathcal{Y}=\{1,\ldots,J\}$ denote the discrete outcome space, with target probabilities $\pi_1,\ldots,\pi_J$. Given $n$ generated samples, let $O_j$ be the observed count of category $j$ and $E_j = n\pi_j$ be its expected count under the target distribution. The chi-square statistic is defined as
\begin{equation}
  \chi^2 = \sum_{j=1}^{J} \frac{(O_j - E_j)^2}{E_j}.
\end{equation}
A smaller $\chi^2$ value indicates that the empirical frequencies are closer to the prescribed target probabilities. Unlike KS and $W_1$, which are designed for continuous distributions, the chi-square statistic directly evaluates categorical frequency mismatch and is therefore used for discrete sampling evaluations.

\subsection{LLM-Generated Seeds Are Not Random}
\label{sec:llmsfail}

Seed-based sampling methods, such as SSoT, rely on the assumption that a single LLM-generated string serves as a reliable randomness source. We test this assumption directly. We prompt each model to generate 16-character random strings drawn from the 95 printable ASCII characters (codes 32-126), collecting 200 strings in each of 5 repeats (yielding 16{,}000 character-level samples) per model. If the model can select characters uniformly and independently, each character should appear with frequency $1/95$.

\begin{figure}[t]
\centering
\includegraphics[width=\columnwidth]{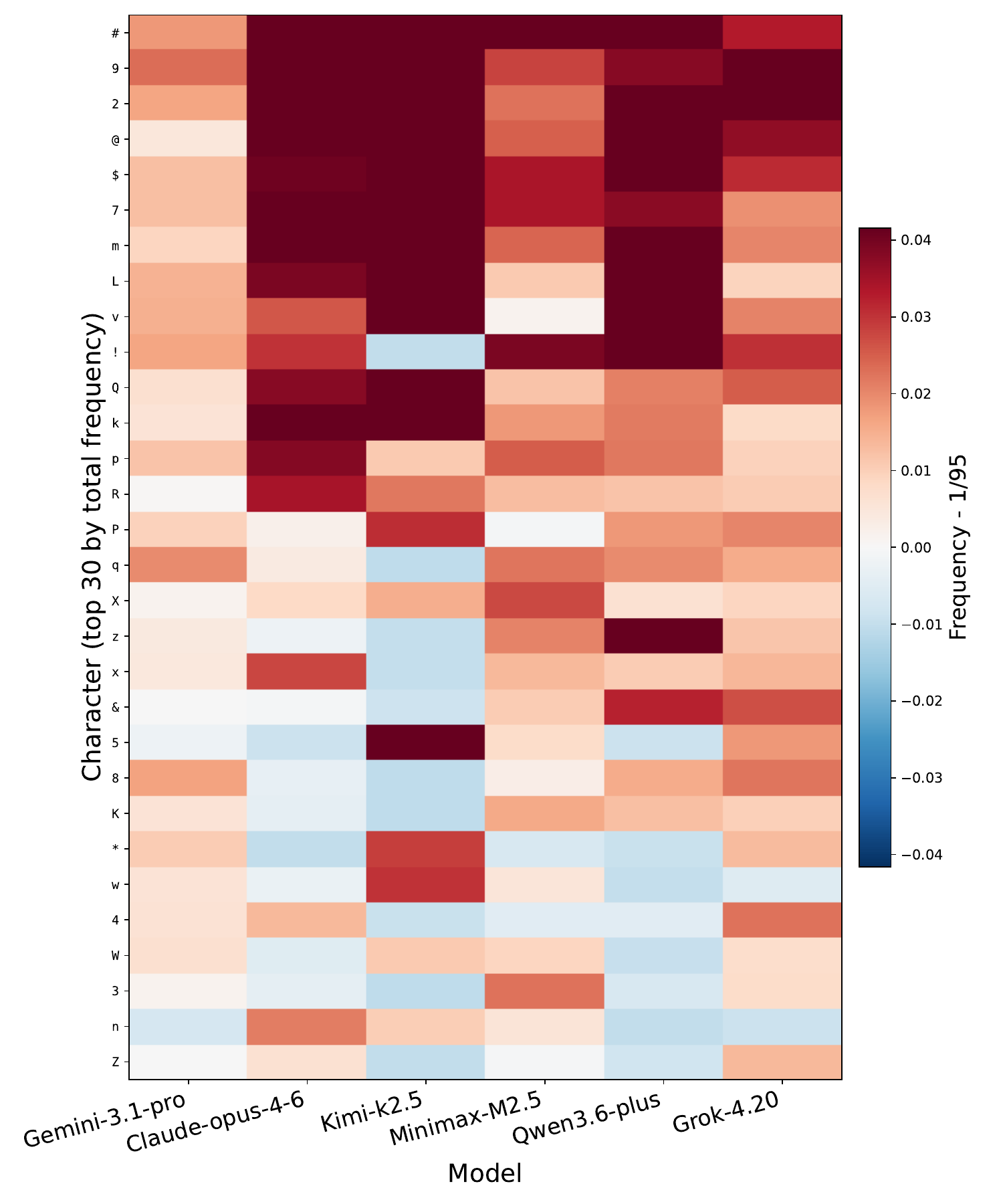}
\caption{
Character-frequency deviation from the printable-ASCII alphabet characters across 6 LLMs. Rows show the 30 characters with the highest total frequency across all models. Each cell reports the empirical frequency for a model minus $1/95 \approx .0105$; red indicates overrepresentation and blue indicates underrepresentation.
}
\label{fig:char_heatmap}
\end{figure}

Figure~\ref{fig:char_heatmap} visualizes the per-character frequency bias across models. Each cell represents the deviation from the ideal uniform baseline of $1/95 \approx 0.0105$: positive values (red) indicate overrepresentation, zero (white) denotes ideal frequency, and negative values (blue) signify underuse. Under ideal uniform sampling, the heatmap would be entirely white; instead, all models display pronounced column-wise biases. Certain characters, such as \texttt{\#}, \texttt{@}, digits, and some letters, are heavily favored, whereas others are nearly absent.

Table~\ref{tab:char_level} quantifies these observations. 
Although several models use a broad portion of the printable ASCII space, the KS test rejects character-level uniformity for all models, with mean KS statistics ranging from $.082$ to $.197$ and $p < .001$ across all tested models.\footnote{For each generated character $c$, we compute $u_c=(\mathrm{ord}(c)-32)/95 \in [0,1)$ and apply a one-sample KS test against $\mathcal{U}(0,1)$. We reject character-level uniformity when the resulting $p$-value is below the significance threshold $\alpha=0.05$.} 
Vocabulary diversity also varies substantially: Gemini 3.1 Pro uses 94 of 95 printable characters, whereas Kimi-K2.5 collapses to a restricted subset of only 29. 
These results show that LLM-generated strings are not reliable uniform seed sources: their character distributions deviate systematically from the uniform ideal, and any downstream method that directly treats these strings as random seeds inherits this bias.

\begin{table}[t]
\resizebox{\linewidth}{!}{
\centering
\small
\begin{tabular}{lccc}
\toprule
\textbf{Model} & \textbf{Unique Chars} & \textbf{KS Statistic} & \textbf{$p$-value $\uparrow$}\\
 & \textbf{(/95)} & \textbf{(mean $\pm$ std)} & \\
\midrule
Gemini 3.1 Pro  & 94 & $.082 \pm .003$ & $< .001$\\
Claude Opus 4.6 & 68 & $.118 \pm .001$ & $< .001$\\
Kimi-K2.5       & 29 & $.155 \pm .003$ & $< .001$\\
MiniMax-M2.5    & 95 & $.121 \pm .004$ & $< .001$\\
Qwen3.6 Plus    & 58 & $.158 \pm .011$ & $< .001$\\
Grok-4.20       & 82 & $.197 \pm .003$ & $< .001$\\
\bottomrule
\end{tabular}
}
\caption{Character-level uniformity test results. All models reject the uniform distribution, confirming systematic character-level bias.}
\label{tab:char_level}
\end{table}

\section{Proposed Method}
\label{sec:method}

We introduce \textbf{Dual-Seed Comparison} (DSC), a prompt-based protocol that extracts a pseudo-uniform variate from the character-level stochasticity of LLM text generation and uses it as a universal intermediate representation for one-dimensional sampling. Once obtained, this variate can be mapped to arbitrary target distributions via standard transforms such as inverse CDF sampling. DSC avoids interpreting a single biased string directly. Instead, it compares two strings generated under the same procedure, so shared character-level biases are largely mitigated by symmetry. DSC exploits this property through a five-stage pipeline that is executed entirely within a single LLM inference call. An example is shown in Figure~\ref{fig:main}.

Let $\mathcal{A}$ denote the set of 95 printable ASCII characters (codes 32-126).
Given a language model $\mathcal{M}$, one invocation of DSC proceeds as follows.

\paragraph{Stage 1: Dual Seed Generation.}
The model is prompted to produce two independent character sequences of length $L$:
\begin{equation}
\begin{aligned}
  s^{(1)} &= (c^{(1)}_1,\, c^{(1)}_2,\, \dots,\, c^{(1)}_{L}), \\
  s^{(2)} &= (c^{(2)}_1,\, c^{(2)}_2,\, \dots,\, c^{(2)}_{L}), \qquad
  c^{(k)}_i \in \mathcal{A}.
\end{aligned}
\end{equation}
Each seed is drawn from the token distribution of the model; no external random source is involved. The two seeds are required to be generated independently within the same context window, so that their joint distribution reflects the full stochasticity of $\mathcal{M}$.

Although the two seeds are prompted to be generated independently, in practice, the model may still introduce correlated patterns across them. To reduce the influence of position-specific generation patterns, we ask the model to separately shuffle the characters within each seed before comparison. Formally, let $\pi_1$ and $\pi_2$ denote two permutations over $\{1,\dots,L\}$ generated by the model. The shuffled seeds are
\begin{equation}
\begin{aligned}
  \tilde{s}^{(1)} &= (c^{(1)}_{\pi_1(1)},\, c^{(1)}_{\pi_1(2)},\, \dots,\, c^{(1)}_{\pi_1(L)}), \\
  \tilde{s}^{(2)} &= (c^{(2)}_{\pi_2(1)},\, c^{(2)}_{\pi_2(2)},\, \dots,\, c^{(2)}_{\pi_2(L)}).
\end{aligned}
\end{equation}
Subsequent stages operate on $\tilde{s}^{(1)}$ and $\tilde{s}^{(2)}$ rather than the original orderings.

\paragraph{Stage 2: Bitwise Ordinal Comparison.}
For each position $i \in \{1, \dots, L\}$, a binary digit is obtained by comparing the ASCII ordinal values of the corresponding characters:
\begin{equation}
  b_i =
  \begin{cases}
    1 & \text{if } \operatorname{ord}(c^{(1)}_{\pi_1(i)}) > \operatorname{ord}(c^{(2)}_{\pi_2(i)}), \\
    0 & \text{otherwise}.
  \end{cases}
  \label{eq:bit}
\end{equation}
where $\operatorname{ord}(\cdot)$ denotes the codepoint of a character under the fixed alphabet ordering. This comparison acts as a thresholding operator: irrespective of the marginal character distributions of $\tilde{s}^{(1)}$ and $\tilde{s}^{(2)}$, each $b_i$ follows a Bernoulli distribution whose parameter depends solely on the relative ordering of the two characters at position $i$. Suppose that the two characters compared at position $i$ are independent draws from the same marginal distribution $P_c$. By symmetry, the greater-than and less-than outcomes have equal probability. Since ties are deterministically mapped to $0$, we obtain
\begin{equation}
    \Pr(b_i=1)
    =
    \frac{1-\Pr(c^{(1)}_{\pi_1(i)}=c^{(2)}_{\pi_2(i)})}{2}.
\end{equation}
Therefore, the comparison bit approaches a fair Bernoulli outcome when character collisions are rare, even if $P_c$ is not uniform.

In Appendix~\ref{app:proof}, we formally analyze the bias-suppression mechanism of the ordinal comparison operator.

\paragraph{Stage 3: Binary-to-Integer Conversion.}
The $L$-bit comparison bits $\mathbf{b} = (b_1, b_2, \dots, b_{L})$ is interpreted as a big-endian unsigned integer:
\begin{equation}
    N = \sum_{i=1}^{L} b_i \cdot 2^{L-i},
    \qquad N \in \{0,1,\dots,2^L-1\}.
  \label{eq:N}
\end{equation}

\paragraph{Stage 4: Normalization.}
The integer $N$ is mapped to the unit interval:
\begin{equation}
  u = \frac{N}{2^L}, \ u \in \Bigl\{\frac{0}{2^L},\; \frac{1}{2^L},\; \dots,\; \frac{2^L - 1}{2^L}\Bigr\} \subset [0, 1).
  \label{eq:u}
\end{equation}
As a result, we can have $2^{L}$ distinct output values with a resolution of $\Delta u = 2^{-L}$. This granularity keeps the arithmetic within the range that contemporary LLMs can perform reliably in a single forward pass.

\paragraph{Stage 5: Transformation to Target Continuous Distributions.}

After Stages 1-4, DSC obtains a pseudo-uniform variate $u \in [0,1)$. 
If the target distribution is uniform on $[0,1)$, the procedure terminates at this point and directly returns $u$ as the sample. 
For other target distributions, DSC uses $u$ as the randomness source and instructs the model to convert it into a sample from the specified distribution. Conceptually, this conversion follows the inverse-transform principle: for a target distribution with cumulative distribution function $F_0$, the corresponding sample is given by the quantile mapping
\begin{equation}
d = F_0^{-1}(u).
\label{eq:icdf}
\end{equation}
In practice, the model is prompted to carry out this transformation using its own mathematical reasoning, whether the required quantile computation is available in closed form or must be approximated. 

DSC extracts randomness from the relative order of two strings. Even if both seeds share the same biased marginal distribution over $\mathcal{A}$, the comparison in Eq.~\ref{eq:bit} mitigates much of the bias: the probability $\Pr(b_i=1)$ depends on the joint distribution of $(c^{(1)}_{\pi_1(i)}, c^{(2)}_{\pi_2(i)})$ rather than on the frequencies of individual characters.

\section{Experiments}
\label{sec:experiments}

\subsection{Experimental Setup}

\paragraph{Models.}
We evaluate five language models spanning both proprietary and open-source families. The proprietary group consists of three large-scale models: Claude-opus-4-6 \citep{claude4-6-opus}, Gemini-3.1-pro-preview-thinking \citep{gemini3.1-pro}, and MiniMax-M2.5 \cite{minimax2025minimaxm25}. The open-source group includes two smaller-scale models: Qwen3.5-27B and Qwen3.5-9B \citep{qwen3.5}. We set temperature = 1.0 to maximize output entropy, and leave $\text{top}_p$ at its API default, which is 1.0 for the endpoint used.  

\paragraph{Baselines.}
We compare three approaches for generating samples from the target distributions:

\begin{itemize}

\item \textbf{Direct}: The model is prompted to output a random number drawn from the specified target distribution directly, with no intermediate steps. This serves as a na\"ive baseline that tests the raw ability of the model to produce samples from continuous distributions.

\item \textbf{SSoT} (String Seed of Thought; \citealp{misaki2025stringseedthoughtprompting}): Representing the current state-of-the-art (SOTA) baseline, SSoT prompts the model to first generate a random string to serve as an entropy seed. The model then autonomously formulates a procedure to derive a sample from the target distribution. Crucially, the exact seed-to-sample mapping is not strictly prescribed; rather, the prompt merely instructs the model to ``use the generated seed to guide any random sampling.'' Since SSoT was originally designed for discrete and diversity-oriented generation, we use it as the closest intrinsic seed-based baseline rather than as an optimized continuous-sampling method.

\item \textbf{DSC} (Dual-Seed Comparison): The proposed method described in Section~\ref{sec:method}.
\end{itemize}

To ensure a fair comparison, we fix the length of the generated strings to 16 characters ($L=16$) across DSC and SSoT. 

\begin{table}[t]
\centering
\resizebox{\linewidth}{!}{
\small
\begin{tabular}{lll}
\toprule
\textbf{Distribution} & \textbf{Parameters} & \textbf{Diagnostic Focus} \\
\midrule
Uniform     & $a=0,\; b=1$          & Bounded flatness \\
Normal    & $\mu=0,\; \sigma=1$   & Symmetric tails \\
Exponential & $\lambda=1$           & Right skew \\
Beta        & $\alpha=2,\; \beta=5$ & Bounded skew \\
Gamma       & $\alpha=2,\; \beta=2$ & Shape-scale density \\
\bottomrule
\end{tabular}
}
\caption{Target distributions and parameter settings used in the evaluation.}
\label{tab:distributions}
\end{table}

\begin{table*}[t]
\centering
\resizebox{\textwidth}{!}{%
\begin{tabular}{l|ccc|ccc|ccc|ccc|ccc}
\toprule
& \multicolumn{3}{c|}{\textbf{Uniform}} & \multicolumn{3}{c|}{\textbf{Normal}} & \multicolumn{3}{c|}{\textbf{Exponential}} & \multicolumn{3}{c|}{\textbf{Beta}} & \multicolumn{3}{c}{\textbf{Gamma}} \\
\textbf{Model} & Direct & SSoT & DSC & Direct & SSoT & DSC & Direct & SSoT & DSC & Direct & SSoT & DSC & Direct & SSoT & DSC \\
\midrule
Claude Opus 4.6
  & .717{$\pm$.015} & .241{$\pm$.033} & \textbf{.141{$\pm$.025}}
  & .653{$\pm$.012} & .476{$\pm$.015} & \textbf{.169{$\pm$.030}}
  & .595{$\pm$.024} & .373{$\pm$.017} & \textbf{.121{$\pm$.010}}
  & .489{$\pm$.070} & .274{$\pm$.020} & \textbf{.216{$\pm$.024}}
  & .558{$\pm$.000} & .905{$\pm$.008} & \textbf{.132{$\pm$.038}} \\
Gemini 3.1 Pro
  & .332{$\pm$.007} & .373{$\pm$.026} & \textbf{.082{$\pm$.016}}
  & .485{$\pm$.012} & .232{$\pm$.051} & \textbf{.095{$\pm$.005}}
  & .328{$\pm$.004} & .347{$\pm$.016} & \textbf{.082{$\pm$.015}}
  & .416{$\pm$.009} & .494{$\pm$.014} & \textbf{.090{$\pm$.025}}
  & .334{$\pm$.009} & .384{$\pm$.005} & \textbf{.112{$\pm$.018}} \\
MiniMax-M2.5
  & .303{$\pm$.038} & .117{$\pm$.004} & \textbf{.104{$\pm$.023}}
  & .185{$\pm$.057} & .159{$\pm$.028} & \textbf{.135{$\pm$.012}}
  & .310{$\pm$.024} & \textbf{.084{$\pm$.024}} & .097{$\pm$.028}
  & .302{$\pm$.009} & .226{$\pm$.017} & \textbf{.163{$\pm$.027}}
  & .192{$\pm$.004} & .195{$\pm$.031} & \textbf{.134{$\pm$.027}} \\
Qwen3.5-27B
  & .268{$\pm$.008} & .184{$\pm$.013} & \textbf{.082{$\pm$.004}}
  & .274{$\pm$.020} & .233{$\pm$.021} & \textbf{.079{$\pm$.016}}
  & .430{$\pm$.028} & .204{$\pm$.012} & \textbf{.083{$\pm$.014}}
  & .413{$\pm$.008} & .241{$\pm$.024} & \textbf{.076{$\pm$.016}}
  & .463{$\pm$.025} & .247{$\pm$.034} & \textbf{.069{$\pm$.010}} \\
Qwen3.5-9B
  & .272{$\pm$.021} & .162{$\pm$.013} & \textbf{.089{$\pm$.026}}
  & .274{$\pm$.026} & .234{$\pm$.027} & \textbf{.120{$\pm$.014}}
  & .315{$\pm$.016} & .165{$\pm$.018} & \textbf{.076{$\pm$.024}}
  & .401{$\pm$.011} & .248{$\pm$.023} & \textbf{.053{$\pm$.012}}
  & .471{$\pm$.021} & .302{$\pm$.014} & \textbf{.158{$\pm$.022}} \\
\bottomrule
\end{tabular}%
}
\caption{KS statistic ($D_n$, mean $\pm$ std over 5 repeats, $n=200$ per repeat). Lower is better. \textbf{Bold} indicates the best method for each model-distribution pair.}
\label{tab:ks_results}
\end{table*}

\begin{table*}[t]
\centering
\resizebox{\textwidth}{!}{%
\begin{tabular}{l|ccc|ccc|ccc|ccc|ccc}
\toprule
& \multicolumn{3}{c|}{\textbf{Uniform}} & \multicolumn{3}{c|}{\textbf{Normal}} & \multicolumn{3}{c|}{\textbf{Exponential}} & \multicolumn{3}{c|}{\textbf{Beta}} & \multicolumn{3}{c}{\textbf{Gamma}} \\
\textbf{Model} & Direct & SSoT & DSC & Direct & SSoT & DSC & Direct & SSoT & DSC & Direct & SSoT & DSC & Direct & SSoT & DSC \\
\midrule
Claude Opus 4.6
  & .300{$\pm$.005} & .086{$\pm$.011} & \textbf{.057{$\pm$.013}}
  & .849{$\pm$.002} & .692{$\pm$.032} & \textbf{.273{$\pm$.024}}
  & .695{$\pm$.010} & .480{$\pm$.018} & \textbf{.210{$\pm$.020}}
  & .104{$\pm$.017} & .390{$\pm$.042} & \textbf{.049{$\pm$.005}}
  & 2.120{$\pm$.002} & 3.449{$\pm$.018} & \textbf{.662{$\pm$.138}} \\
Gemini 3.1 Pro
  & .148{$\pm$.004} & .176{$\pm$.018} & \textbf{.038{$\pm$.010}}
  & .695{$\pm$.018} & .388{$\pm$.052} & \textbf{.183{$\pm$.017}}
  & .445{$\pm$.012} & .510{$\pm$.019} & \textbf{.174{$\pm$.021}}
  & .104{$\pm$.002} & .117{$\pm$.002} & \textbf{.028{$\pm$.007}}
  & 1.515{$\pm$.051} & 1.618{$\pm$.083} & \textbf{.619{$\pm$.151}} \\
MiniMax-M2.5
  & .156{$\pm$.017} & .096{$\pm$.060} & \textbf{.042{$\pm$.014}}
  & .679{$\pm$.516} & 3.004{$\pm$3.896} & \textbf{.203{$\pm$.017}}
  & .408{$\pm$.032} & 1.007{$\pm$1.276} & \textbf{.117{$\pm$.017}}
  & \textbf{.076{$\pm$.003}} & .389{$\pm$.578} & .080{$\pm$.019}
  & .989{$\pm$.036} & 1.452{$\pm$1.023} & \textbf{.734{$\pm$.162}} \\
Qwen3.5-27B
  & .118{$\pm$.007} & .081{$\pm$.014} & \textbf{.034{$\pm$.007}}
  & .412{$\pm$.026} & .396{$\pm$.034} & \textbf{.132{$\pm$.033}}
  & .602{$\pm$.026} & .253{$\pm$.034} & \textbf{.129{$\pm$.048}}
  & .103{$\pm$.001} & .060{$\pm$.003} & \textbf{.019{$\pm$.003}}
  & 1.672{$\pm$.050} & 1.089{$\pm$.101} & \textbf{.293{$\pm$.074}} \\
Qwen3.5-9B
  & .133{$\pm$.013} & .067{$\pm$.006} & \textbf{.040{$\pm$.010}}
  & .395{$\pm$.023} & .348{$\pm$.026} & \textbf{.294{$\pm$.043}}
  & .282{$\pm$.036} & .160{$\pm$.017} & \textbf{.088{$\pm$.038}}
  & .098{$\pm$.002} & .072{$\pm$.006} & \textbf{.017{$\pm$.004}}
  & 1.716{$\pm$.040} & 1.565{$\pm$.100} & \textbf{.841{$\pm$.090}} \\
\bottomrule
\end{tabular}%
}
\caption{Wasserstein distance ($W_1$, mean $\pm$ std over 5 repeats, $n=200$ per repeat). Lower is better. \textbf{Bold} indicates the best method for each model-distribution pair.}
\label{tab:wasserstein_results}
\end{table*}

\paragraph{Distribution Sampling.}
We evaluate all methods on five continuous target distributions, as summarized in Table~\ref{tab:distributions}. The suite spans uniform, symmetric, skewed, bounded, and shape-scale distributions, covering a range of sampling behaviors. 
For the uniform distribution, DSC directly treats the generated variate $u \in [0,1)$ as the output sample. For all other distributions, DSC exposes the intermediate variate $u$ and instructs the LLM to convert it into the target sample $d$ according to the target distribution, typically via the corresponding quantile transformation. Due to the page limits, we provide instruction details in Appendix~\ref{app:ds}.

\paragraph{Downstream Applications}

To further validate the practical utility of the DSC algorithm, we evaluate its performance on two downstream applications introduced by \citet{zhao2026baddice}. Due to the page limits, scenario descriptions are detailed in Appendix~\ref{app:dt}.

\begin{itemize}
    \item \textbf{MCQ Generation.}  Models are tasked with generating multiple-choice medical questions independently. To prevent positional bias, prompts explicitly require the correct answer to be uniformly distributed across options A, B, C, and D. 
    \item \textbf{Attribute-Constrained Prompt Generation.} We stress-test sampling performance in a semantically complex environment. Models generate text-to-image prompts independently, each describing a person wearing a coat. The outputs must adhere to four target distributions: Gender (Male 49.5\%, Female 50.5\%) and Race/Ethnicity (White 57.5\%, Hispanic 20.0\%, Black 12.6\%, Asian 6.5\%, Other 3.4\%), both derived from 2024 U.S. Census Bureau data; Height following a normal distribution $\mathcal{N}(169, 10^2)$ cm; and Coat Color uniformly distributed across seven categories. 
\end{itemize}

\paragraph{Settings.}
Each experiment consists of $n = 200$ independent trials per method per model. To quantify variability, we repeat each experiment $5$ times with identical configurations, yielding $200 \times 5 = 1{,}000$ total trials per condition. Repeats are executed sequentially; within each repeat, each trial corresponds to an independent API call with no shared context. We report the mean $\pm$ standard deviation of each metric across the $5$ repeats.

\subsection{Distribution Sampling Evaluation}
\label{sec:dse}

\paragraph{Results.} Tables~\ref{tab:ks_results} and~\ref{tab:wasserstein_results} report the distributional fidelity of DSC, Direct, and SSoT across five target distributions and five language models. DSC achieves the lowest mean error in 48 of the 50 metric-specific comparisons. Taking the better of Direct and SSoT as the strongest baseline in each comparison, DSC reduces the reported error by a median of 58.5\% across the 50 comparisons.

This consistent improvement is aligned with the design of DSC. Direct sampling requires the model to realize the target distribution in a single generation step, while SSoT relies on an unconstrained seed-to-sample mapping. In contrast, DSC first extracts a pseudo-uniform variate through relative character comparisons and then applies an
explicit target-distribution transformation. The ordinal comparison reduces sensitivity to absolute character-frequency preferences, and the arithmetic process makes the seed-to-sample mapping transparent and consistent.

The gains are particularly pronounced for non-uniform target
distributions. For example, for Qwen3.5-9B on $Beta(2,5)$, DSC reduces the KS statistic from $.248$ to $.053$ and $W_1$ from $.072$ to $.017$ against SSoT, corresponding to reductions of 78.6\% and 76.4\%, respectively. The
results show that the benefit of DSC extends beyond uniform sampling to bounded, skewed, and shape-scale distributions. In absolute terms, Qwen3.5-27B obtains mean KS statistics below $D_{200,0.05}=.096$ for all 5 distributions, while Gemini~3.1~Pro and Qwen3.5-9B do so for 4 and 3 distributions, respectively.

The only two exceptions occur with MiniMax-M2.5. On the Exponential distribution, DSC obtains a KS statistic of $.097$, compared with $.084$ for SSoT. On the Beta distribution, DSC obtains $W_1=.080$, slightly above the Direct result of $.076$. Both differences are small and confined to a single model. We examine the sources of these residual errors through the pipeline diagnostics described next.

\paragraph{Pipeline Diagnostics.} 
As final-sample evaluation does not identify where errors arise in DSC, we conduct a more precise analysis in Appendix~\ref{sec:ablation} and Appendix~\ref{app:bit_seed_diagnostics}. Appendix~\ref{sec:ablation} further combines stage-wise execution accuracy, which checks whether each operation is performed exactly; stage-wise KS analysis, which tracks whether each operation changes distributional fidelity; and a seed-source ablation, which isolates seed defects from execution errors. Appendix~\ref{app:bit_seed_diagnostics} further compares the original and shuffled seed pairs at the bit and seed levels.

Across 20 of the 25 model-distribution settings, the final-stage $D_n$ differs from its Stage~2 value by at most $0.02$. Thus, most deviation in distribution fidelity is already present after seed generation and comparison-bit construction, while later operations usually preserve it. The seed ablation supports this interpretation: shuffling reduces fixed-position structure, and program-generated seeds reduce the remaining error further. However, for some distributions, the nonlinear transformation in Stage~5 can introduce additional distortion in a few nonlinear-transformation settings.





\subsection{Downstream Applications}
In the downstream experiments, we evaluate task-adapted implementations of DSC rather than the complete five-stage sampling pipeline. These implementations retain the core dual-seed ordinal-comparison operator but apply it without shuffling. 

\paragraph{MCQ Generation.} 
We record the position of the correct answer for each question and conduct a $\chi^2$ goodness-of-fit test against the ideal uniform distribution (25\% per option). For the four-option MCQ task, DSC extracts only two bits from the comparison string, mapping the 2-bit code to answer letters ($00\rightarrow A$, $01\rightarrow B$, $10\rightarrow C$, $11\rightarrow D$).

\paragraph{Attribute-Constrained Prompt Generation.} 
We evaluate categorical attributes via $\chi^2$ tests and continuous height via the KS test. For the attribute-constrained task, DSC generates four independent pairs of seeds, one per attribute, each producing a pseudo-uniform variate that is mapped to the target category via cumulative thresholds. 
\begin{table}[t]
\centering
\resizebox{\columnwidth}{!}{%
\begin{tabular}{l cccc >{\columncolor{gray!12}}c >{\columncolor{gray!12}}c}
\toprule
Method & A (\%) & B (\%) & C (\%) & D (\%) & $\chi^2 \downarrow$ & $p$-value $\uparrow$ \\
\midrule
Direct & 3.3$\pm$1.2 & 20.3$\pm$2.4 & 63.9$\pm$2.7 & 12.5$\pm$1.1 & 174.3$\pm$21.7 & $<.001$ \\
SSoT & 14.9$\pm$1.8 & 32.0$\pm$3.1 & 37.6$\pm$3.8 & 15.5$\pm$1.3 & 34.3$\pm$9.9 & $<.001$ \\
DSC & 26.2$\pm$1.4 & 25.0$\pm$4.1 & 24.7$\pm$3.4 & 24.1$\pm$1.9 & \textbf{2.9$\pm$1.5} & \textbf{.458$\pm$.239} \\
\midrule
Uniform & 25.0 & 25.0 & 25.0 & 25.0 & -- & -- \\
\bottomrule
\end{tabular}
}
\caption{Distribution fidelity in MCQ generation (Qwen3.5-9B, mean $\pm$ std over 5 repeats, $n=200$ per repeat). Target: Uniform 25\% per option. \textbf{Bold} indicates the best $\chi^2$.}
\label{tab:mcq_bias}
\end{table}

\begin{table}[t]
\centering
\resizebox{\columnwidth}{!}{%
\begin{tabular}{l ccc|c}
\toprule
& \textbf{Direct} & \textbf{SSoT} & \textbf{DSC} & \textbf{Target} \\
\midrule

\multicolumn{5}{l}{\textbf{Task: Gender}} \\
\midrule
\rowcolor{gray!12}
$\chi^2 \downarrow$ & 60.2$\pm$12.4 & 88.3$\pm$15.6 & \textbf{1.5$\pm$2.5} & -- \\
Male (\%)   & 22.2$\pm$2.7 & 16.4$\pm$3.0 & 52.3$\pm$3.3 & 49.5 \\
Female (\%) & 77.8$\pm$2.7 & 83.6$\pm$3.0 & 47.7$\pm$3.3 & 50.5 \\

\midrule
\multicolumn{5}{l}{\textbf{Task: Race/Ethnicity}} \\
\midrule
\rowcolor{gray!12}
$\chi^2 \downarrow$ & 168.9$\pm$14.2 & 53.3$\pm$16.1 & \textbf{31.9$\pm$19.4} & -- \\
White (\%)    & 17.6$\pm$1.2 & 39.3$\pm$4.5 & 45.6$\pm$6.8 & 57.5 \\
Hispanic (\%) & 48.0$\pm$3.0 & 39.3$\pm$2.7 & 29.7$\pm$5.3 & 20.0 \\
Black (\%)    & 20.9$\pm$2.8 & 13.1$\pm$2.6 & 19.1$\pm$3.5 & 12.6 \\
Asian (\%)    & 13.5$\pm$1.3 & 6.5$\pm$0.8 & 4.0$\pm$1.1 & 6.5 \\
Others (\%)   & 0.0$\pm$0.0 & 1.8$\pm$0.8 & 1.6$\pm$0.8 & 3.4 \\

\midrule
\multicolumn{5}{l}{\textbf{Task: Height}} \\
\midrule
\rowcolor{gray!12}
$D_n$ $\downarrow$ & 0.167$\pm$0.012 & 0.308$\pm$0.035 & \textbf{0.135$\pm$0.029} & -- \\
\rowcolor{gray!12}
$W_1 \downarrow$ & 3.20$\pm$0.39 & 5.24$\pm$0.24 & \textbf{2.15$\pm$0.32} & -- \\
$\mu$ (cm)     & 169.6$\pm$0.5 & 171.8$\pm$0.5 & 168.8$\pm$1.5 & 169.0 \\
$\sigma$ (cm) & 6.2$\pm$0.4   & 4.9$\pm$0.6   & 11.9$\pm$2.3  & 10.0 \\

\midrule
\multicolumn{5}{l}{\textbf{Task: Coat Color}} \\
\midrule
\rowcolor{gray!12}
$\chi^2 \downarrow$ & 89.5$\pm$15.0 & 91.9$\pm$12.2 & \textbf{33.5$\pm$9.2} & -- \\
Black (\%)  & 2.6$\pm$0.5  & 6.2$\pm$1.7  & 18.5$\pm$3.5 & 14.3 \\
White (\%)  & 0.9$\pm$0.6  & 5.4$\pm$1.5  & 9.1$\pm$2.0  & 14.3 \\
Red (\%)    & 19.3$\pm$1.4 & 13.4$\pm$3.9 & 18.7$\pm$2.5 & 14.3 \\
Blue (\%)   & 23.2$\pm$2.9 & 33.7$\pm$1.2 & 21.1$\pm$3.1 & 14.3 \\
Green (\%)  & 27.5$\pm$3.4 & 21.2$\pm$4.5 & 15.5$\pm$2.3 & 14.3 \\
Yellow (\%) & 13.2$\pm$3.8 & 9.4$\pm$2.3  & 11.7$\pm$1.6 & 14.3 \\
Brown (\%)  & 13.3$\pm$2.2 & 10.7$\pm$1.6 & 5.4$\pm$1.4  & 14.3 \\

\bottomrule
\end{tabular}%
}
\caption{Distribution fidelity in attribute-constrained prompt generation (Qwen3.5-9B, mean $\pm$ std over 5 repeats, $n=200$ per repeat). Shaded rows report the primary distributional metrics, where lower values indicate closer agreement with the target distribution. \textbf{Bold} indicates the best result for each metric.}
\label{tab:joint_attr}
\end{table}

\paragraph{Results.} We use Qwen3.5-9B and compare DSC with the two baselines. Table~\ref{tab:mcq_bias} reports the answer distribution for a four-option MCQ task where the target is uniform selection. Direct prompting exhibits extreme positional bias, concentrating 63.9\% of selections on option C while nearly ignoring option A (3.3\%). SSoT reduces this imbalance but still deviates significantly from uniformity ($\chi^2=34.3$, $p<.001$). In contrast, DSC produces a near-uniform distribution across all four options, with a non-significant chi-square test ($\chi^2=2.9$, $p=0.458$), indicating that the null hypothesis of uniformity cannot be rejected.  Table~\ref{tab:joint_attr} evaluates DSC on a multi-attribute sampling task where four attributes must simultaneously follow prescribed distributions. For gender, DSC achieves near-perfect balance ($\chi^2=1.5$), while Direct and SSoT both exhibit strong female bias (78\% and 84\% respectively). For race/ethnicity, DSC attains the best performance ($\chi^2=31.9$), though all three methods still deviate from the census targets. For the continuous height attribute, DSC best recovers both the target mean ($\mu=168.8$ vs.\ target 169.0) and spread ($\sigma=11.9$ vs.\ target 10.0), yielding the lowest KS statistic (0.135) and $W_1$ (2.15). For coat color, DSC again achieves the lowest chi-square (33.5), distributing selections more evenly across all seven categories. These results demonstrate that the advantage of DSC extends beyond univariate sampling to joint multi-attribute generation, suggesting its practical potential as a lightweight debiasing mechanism for distribution-controlled generation tasks.

\section{Conclusion}

We presented DSC, a tool-free protocol enabling LLMs to natively sample from target distributions. By extracting a pseudo-uniform variate through position-wise string comparisons, DSC achieves the best KS statistic in 24 of 25 model-distribution conditions. Beyond statistical fidelity, DSC improves attribute-level control in downstream generation prompts. In MCQ generation and attribute-constrained prompt generation tasks, DSC yields attribute distributions closer to those produced by an ideal probabilistic sampler. As a natural extension of this work, future research can explore attention masking or decoupling prompts to enforce the mutual independence of seed pairs strictly, pushing the boundaries of native probabilistic sampling of LLMs.

\section*{Limitations}

While Dual-Seed Comparison (DSC) improves distributional fidelity without relying on external tools, it is subject to several practical limitations:

\paragraph{Reliance on LLM Arithmetic Capabilities.} DSC is not a pure natural language task; it requires models to accurately execute multi-step deterministic operations, including ordinal comparison, binary-to-integer conversion, and floating-point division. Models with weaker computational abilities suffer from compounding errors across intermediate stages, rendering the final sampling unreliable. Consequently, DSC is best suited for foundation models with strong reasoning and math capabilities.

\paragraph{Dependence on distribution knowledge.} For non-uniform target distributions, DSC relies on the ability of the model to evaluate or approximate the corresponding quantile function (e.g., inverse CDFs), and convert the pseudo-uniform variate into a target sample. While modern LLMs are highly proficient with standard distributions (e.g., Normal, Exponential), and the accuracy of the method may degrade when applied to obscure or highly complex distributions.

\paragraph{Inference Overhead.} Compared to direct prompting, DSC necessitates generating a substantial number of intermediate tokens (e.g., two string seeds, bits comparison, and intermediate arithmetic steps) to yield a single valid sample. In latency-sensitive or cost-constrained real-time applications, this increased computational overhead may pose a practical limitation.

\section*{Ethical Considerations}

From a risk mitigation perspective, we explicitly caution that DSC is not a cryptographically secure pseudo-random number generator (CSPRNG). DSC is designed exclusively to recover statistical distributional fidelity for applications such as agent-based simulations, synthetic data generation, and procedural content creation. Because the internal state of an LLM is deterministic and its outputs are heavily influenced by the prompt context, the generated random seeds are inherently predictable to an adversary with sufficient system access or prompt knowledge. DSC must never be deployed in security-critical contexts, such as cryptographic key generation, secure token issuance, or applications involving direct financial stakes.

\bibliography{my}

\newpage
\begin{appendices}

\section*{Appendix}
\phantomsection
\addcontentsline{toc}{section}{Appendix} 

\startcontents[appendix]
\startlist[appendix]{lof}

\printcontents[appendix]{}{1}{\section*{Table of Contents}}

\printlist[appendix]{lof}{}{\section*{List of Figures}}

\clearpage

\section{Theoretical Analysis of Ordinal Comparison}
\label{app:proof}

\subsection{Notation and Assumptions}

We analyze the ordinal comparison operator defined in Eq.~\ref{eq:bit}. 
Let $c$ denote a character sampled from the model-induced marginal distribution $P_c$ over the alphabet $\mathcal{A}$ defined in Section~4. For each $a\in\mathcal{A}$, let $p_a=P_c(a)=\Pr(c=a)$.

Define the \emph{collision probability} of $P_{c}$ as
\begin{equation}
\gamma(P_{c})\;\triangleq\;\sum_{a\in\mathcal{A}} p_a^{2},
  \tag{A1}
  \label{eq:collision}
\end{equation}
which is the probability that two independent draws from $P_{c}$ produce the same character. 
Since $|\mathcal{A}|=K$, we have $\gamma(P_{c})\geq 1/K$, with equality if and only if $P_{c}$ is uniform.

\begin{assumption}[Pairwise Independent Seeds]
\label{asm:indep}
The two seed strings used for comparison are independent and identically distributed (i.i.d.) draws from the same distribution $\mathcal{P}_{\text{str}}$ over the string space. 
\end{assumption}

This assumption is local to a single DSC invocation. Unlike a global i.i.d. requirement over all seed strings generated across all sampling trials, DSC requires only that the two seeds compared within each invocation be independent draws from the same distribution. It does not require seed strings from different invocations to be mutually independent.

In practice, characters within a generated string might exhibit local dependencies due to the autoregressive nature of LLMs. However, for mathematical tractability in deriving our theoretical bounds, we introduce the following idealized assumption:

\begin{assumption}[Positional Independence Heuristic]
\label{asm:positional}
The comparison outcomes (bits) at different positions are treated as mutually independent.
\end{assumption}

Under this framework, we write a generic comparison bit as
\begin{equation}
  b = \mathbf{1}\!\left[\operatorname{ord}(c^{(1)}) > \operatorname{ord}(c^{(2)})\right],
  \label{eq:bit_def}
  \tag{A2}
\end{equation}
where $c^{(1)}$ and $c^{(2)}$ are the characters compared at a single generic position. By Assumption~\ref{asm:indep}, because the two underlying strings are mutually independent, $c^{(1)}$ and $c^{(2)}$ are independent draws from the marginal distribution $P_{c}$. 
This is the single-position version of the DSC bit rule in Eq.~\ref{eq:bit}, with ties mapped to $0$.

\subsection{Bias of the Comparison Bit}

\begin{proposition}[Comparison Bias]
\label{prop:bias}
Under Assumption~\ref{asm:indep}, the comparison bit defined in Eq.~\ref{eq:bit_def} satisfies
\begin{align}
  \Pr(b=1)\;=\;\frac{1-\gamma(P_{c})}{2}.
  \tag{A3}
  \label{eq:pr_b}
\end{align}
\end{proposition}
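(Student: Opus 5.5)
The plan is to partition the sample space of the pair $(c^{(1)},c^{(2)})$ into three disjoint events according to the ordinal comparison: $G=\{\operatorname{ord}(c^{(1)})>\operatorname{ord}(c^{(2)})\}$, $S=\{\operatorname{ord}(c^{(1)})<\operatorname{ord}(c^{(2)})\}$ and $T=\{c^{(1)}=c^{(2)}\}$. Since $\operatorname{ord}$ is injective on $\mathcal{A}$ (the fixed alphabet ordering assigns distinct codepoints), equality of ordinals coincides with equality of characters, so these three events are exhaustive and mutually exclusive. This gives $\Pr(G)+\Pr(S)+\Pr(T)=1$. By Eq.~\ref{eq:bit_def}, $b=1$ exactly on $G$, with ties placed in $b=0$.

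Next I will compute the tie probability. Under Assumption~\ref{asm:indep}, $c^{(1)}$ and $c^{(2)}$ are independent draws from $P_c$. Then
\[
\Pr(T)=\sum_{a\in\mathcal{A}}\Pr(c^{(1)}=a)\Pr(c^{(2)}=a)=\sum_{a\in\mathcal{A}}p_a^2=\gamma(P_c),
\]
which is exactly the collision probability in Eq.~\ref{eq:collision}.

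The key step is the symmetry $\Pr(G)=\Pr(S)$. Because the pair is i.i.d., its joint law $p_ap_{a'}$ is invariant under swapping the two coordinates. The swap maps $G$ bijectively onto $S$, so the two events have equal probability. Explicitly,
\[
\Pr(G)=\sum_{\operatorname{ord}(a)>\operatorname{ord}(a')}p_ap_{a'}=\sum_{\operatorname{ord}(a')>\operatorname{ord}(a)}p_{a'}p_a=\Pr(S),
\]
obtained by relabeling the summation indices. Combining this with the partition gives $2\Pr(G)=1-\gamma(P_c)$, and hence $\Pr(b=1)=(1-\gamma(P_c))/2$, which is Eq.~\ref{eq:pr_b}.

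I expect no serious obstacle. The only point needing care is justifying that the two compared characters are i.i.d. with marginal $P_c$ after the shuffles $\pi_1,\pi_2$. Assumption~\ref{asm:indep} gives independence of the two strings, and the paper defines $c^{(1)},c^{(2)}$ as independent draws from $P_c$, so I will invoke that setup directly. Assumption~\ref{asm:positional} is not needed, since the statement concerns a single bit.
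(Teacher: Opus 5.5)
Your proposal is correct and follows essentially the same route as the paper: the three-way partition into the $>$, $<$, and tie events, the tie probability $\sum_a p_a^2=\gamma(P_c)$ from independence, and the exchangeability identity $\Pr(G)=\Pr(S)$, combined to give $2\Pr(b=1)=1-\gamma(P_c)$. Your additions, namely injectivity of $\operatorname{ord}$ on $\mathcal{A}$ and the explicit index relabeling, only spell out steps the paper leaves implicit.
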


\begin{proof}
Because the two strings are mutually independent (Assumption~\ref{asm:indep}), the characters $c^{(1)}$ and $c^{(2)}$ compared at any generic position are independent draws from the same marginal distribution $P_{c}$. 

The sample space decomposes into three disjoint events:
\begin{equation}
\begin{split}
  &\Pr\!\bigl(\operatorname{ord}(c^{(1)}) > \operatorname{ord}(c^{(2)})\bigr)\\
   + &\Pr\!\bigl(\operatorname{ord}(c^{(1)}) < \operatorname{ord}(c^{(2)})\bigr) \\
  + &\Pr\!\bigl(c^{(1)} = c^{(2)}\bigr)
  = 1.
\end{split}
\tag{A4}
\end{equation}
Since $c^{(1)}$ and $c^{(2)}$ are independent and identically distributed, their joint distribution is exchangeable. Hence, by symmetry:
\begin{equation}
\begin{split}
  &\Pr\!\bigl(\operatorname{ord}(c^{(1)})>\operatorname{ord}(c^{(2)})\bigr) \\
  &\qquad =
  \Pr\!\bigl(\operatorname{ord}(c^{(1)})<\operatorname{ord}(c^{(2)})\bigr).
\end{split}
\tag{A5}
\end{equation}
The tie probability is
\begin{equation}
\begin{split}
  \Pr\!\bigl(c^{(1)}=c^{(2)}\bigr)
  &=
  \sum_{a\in\mathcal{A}} \Pr(c^{(1)}=a, c^{(2)}=a) \\
  &=
  \sum_{a\in\mathcal{A}} p_a^2
  =
  \gamma(P_{c}),
\end{split}
\tag{A6}
\end{equation}
where the second equality follows from the mutual independence of $c^{(1)}$ and $c^{(2)}$. Combining the three-way decomposition with the exchangeability identity gives Eq.~\ref{eq:pr_b}.
\end{proof}

\begin{corollary}[Deviation from a Fair Coin]
\label{cor:deviation}
The deviation of $b$ from a fair coin is entirely determined by the collision probability:
\begin{equation}
  \left|\Pr(b=1)-\tfrac{1}{2}\right| \;=\; \frac{\gamma(P_{c})}{2}.
  \tag{A7}
\end{equation}
For $K=95$ printable ASCII characters under a uniform character distribution, $\gamma(P_{c})=1/95$, giving a deviation of $1/190\approx 0.0053$. 
More generally, the deviation depends on the character distribution solely through the collision probability $\gamma(P_{c})$. Notice that $\gamma(P_{c}) = \sum_{a} p_a^2 \le \max_a(p_a)$, meaning the deviation is strictly bounded by $\frac{1}{2}\max_a(p_a)$. Thus, ordinal comparison remains close to a fair coin whenever the character distribution is sufficiently diffuse (i.e., lacks heavy probability masses on any single character), even if it deviates from a strictly uniform distribution.
\end{corollary}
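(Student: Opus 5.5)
The corollary is a direct consequence of Proposition~\ref{prop:bias}, so the plan is to substitute Eq.~\ref{eq:pr_b} and then control $\gamma(P_c)$ with elementary inequalities. Under Assumption~\ref{asm:indep}, Proposition~\ref{prop:bias} gives $\Pr(b=1)=\tfrac{1-\gamma(P_c)}{2}$. Subtracting $\tfrac12$ gives
\[
\Pr(b=1)-\tfrac12=-\tfrac{\gamma(P_c)}{2}.
\]
Since $\gamma(P_c)=\sum_a p_a^2\ge 0$, taking absolute values yields exactly $\gamma(P_c)/2$. This shows that the deviation depends on $P_c$ only through the collision probability. It also shows that the bias always points toward $0$, because ties are mapped to $0$.

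For the uniform case, I set $p_a=1/K$ with $K=95$. Then $\gamma=K\cdot K^{-2}=1/95$, so the deviation is $1/190\approx 0.0053$. I will also recall the fact quoted before the corollary, that $\gamma(P_c)\ge 1/K$, with equality iff $P_c$ is uniform. The proof is Cauchy--Schwarz: $1=(\sum_a p_a)^2\le K\sum_a p_a^2$. So $1/190$ is the smallest deviation any character distribution can achieve.

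For the general bound, I use $p_a^2\le p_a\max_{a'}p_{a'}$ for every $a$. Summing over $a$ and using $\sum_a p_a=1$ gives $\gamma(P_c)\le\max_a p_a$. Hence the deviation is at most $\tfrac12\max_a p_a$.

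The statement says ``strictly bounded.'' Strictness holds unless $P_c$ is uniform on its support, because equality in $p_a^2\le p_a\max p$ requires every $p_a$ to be $0$ or equal to the maximum. The only delicate point is therefore this wording, and I would phrase the bound as a non-strict inequality with that equality case noted. The qualitative conclusion, that the bit is near-fair whenever no single character carries heavy mass, then follows immediately.
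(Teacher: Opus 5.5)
Your proposal is correct and follows the paper's own inline argument: substitute (A3) from Proposition~\ref{prop:bias}, evaluate the uniform case, and bound $\gamma(P_c)=\sum_a p_a^2\le\max_a p_a$ using $\sum_a p_a=1$. Your caveat about the word ``strictly'' is also right. Equality holds whenever $P_c$ is uniform on its support, including the $K=95$ uniform case the corollary itself computes, where both the deviation and $\tfrac12\max_a p_a$ equal $1/190$.
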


The deterministic tie-breaking rule in Eq.~\ref{eq:bit_def} consistently maps ties to $0$, thereby introducing a systematic bias toward $0$ of magnitude $\gamma(P_{c})/2$. While stochastic tie-breaking could theoretically eliminate this residual bias, it would necessitate an external source of randomness. Consequently, we opt for deterministic tie-breaking to preserve the self-contained reproducibility of the protocol.

\subsection{Entropy of the Comparison Bit}

\begin{proposition}[Per-Bit Entropy]
\label{prop:entropy}
Under Assumption~\ref{asm:indep}, let $H(b)$ denote the Shannon entropy of the comparison bit $b$ at any generic position. Then
\begin{equation}
  H(b) \;=\; h\!\left(\frac{1-\gamma(P_{c})}{2}\right),
  \tag{A8}
\end{equation}
where $h(q) = -q\log_2 q - (1-q)\log_2(1-q)$ is the binary entropy function.
\end{proposition}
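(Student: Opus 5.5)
The plan is to reduce the claim directly to Proposition~\ref{prop:bias}. The comparison bit $b$ in Eq.~\ref{eq:bit_def} is an indicator, so it takes values in $\{0,1\}$. Its distribution is therefore Bernoulli with parameter $q \triangleq \Pr(b=1)$. Under Assumption~\ref{asm:indep}, Proposition~\ref{prop:bias} already identifies this parameter as $q = (1-\gamma(P_c))/2$. No further probabilistic work is needed beyond this identification.

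Next, apply the definition of Shannon entropy for a random variable supported on two points:
\begin{equation*}
H(b) = -\Pr(b=1)\log_2 \Pr(b=1) - \Pr(b=0)\log_2 \Pr(b=0).
\end{equation*}
Since $\Pr(b=0) = 1-q$, this is exactly $h(q)$ by the definition of the binary entropy function. Substituting $q=(1-\gamma(P_c))/2$ gives the stated formula.

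One routine check remains: the argument of $h$ must lie in $[0,1]$, where $h$ is well defined with the convention $0\log_2 0 = 0$. Because $1/K \le \gamma(P_c) \le 1$, we have $q \in [0, (1-1/K)/2] \subset [0,1/2)$. The degenerate case $\gamma(P_c)=1$, in which all mass sits on one character, gives $q=0$ and $H(b)=0$. The formula covers this case under the same convention.

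There is no substantive obstacle; the only point needing care is that Assumption~\ref{asm:indep} is what makes Proposition~\ref{prop:bias} applicable at a generic position. As an optional remark, I would note that $h$ is increasing on $[0,1/2]$ and $q$ is decreasing in $\gamma(P_c)$. Hence $H(b)$ is maximized when $P_c$ is uniform, and it approaches one bit as $\gamma(P_c)\to 0$. This connects the result to Corollary~\ref{cor:deviation}.
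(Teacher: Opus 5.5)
Your proposal is correct and matches the paper's proof: use Proposition~\ref{prop:bias} to identify $b$ as Bernoulli with parameter $(1-\gamma(P_c))/2$, then apply the binary entropy formula. One small caveat on your optional remark: since $\gamma(P_c)\ge 1/K$, $H(b)$ approaches one bit only as the alphabet grows, not for fixed $K=95$.
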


\begin{proof}
By Proposition~\ref{prop:bias}, $b\sim\mathrm{Bernoulli}\!\bigl((1-\gamma(P_{c}))/2\bigr)$. 
The result follows directly from the entropy of a Bernoulli random variable.
\end{proof}

This result shows that the entropy of each comparison bit is governed by the collision probability $\gamma(P_{c})$. 
When the character distribution is diffuse, $\gamma(P_{c})$ is small, and the Bernoulli parameter $(1-\gamma(P_{c}))/2$ is close to $1/2$, yielding nearly one bit of entropy per comparison. 
For a uniform distribution over $K=95$ printable ASCII characters, $\gamma(P_{c})=1/95$ and $H(b)\approx 1$ bit. 
Thus, ordinal comparison can produce high-entropy bits even when the underlying character distribution is not perfectly uniform, provided that ties remain rare.

For an $L$-bit comparison sequence $\mathbf{b}=(b_1,\dots,b_{L})$, the joint entropy satisfies the standard subadditivity bound
\begin{equation}
  H(\mathbf{b}) \;\leq\; \sum_{i=1}^{L} H(b_i).
  \tag{A9}
\end{equation}
If the comparison outcomes are identically distributed with entropy $H(b)$, this becomes
\begin{equation}
  H(\mathbf{b}) \;\leq\; L\cdot H(b),
  \tag{A10}
  \label{eq:joint_entropy}
\end{equation}
with equality if the comparison outcomes are independent across positions (as formalized by our idealized heuristic in Assumption~\ref{asm:positional}). 
In practice, as previously noted, local positional dependencies in LLM-generated strings mean the bits may not be perfectly independent, potentially reducing the actual joint entropy below this theoretical upper bound.

\begin{figure*}[t]
\centering
\begin{minipage}[t]{0.32\textwidth}
    \centering
    \includegraphics[width=\linewidth]{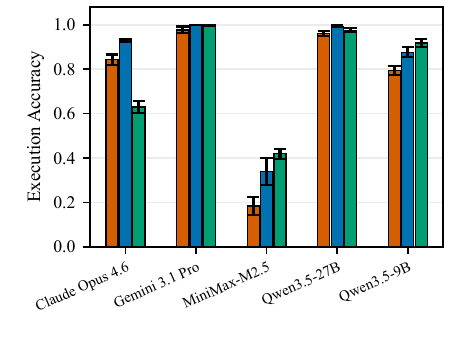}
    \subcaption{Uniform - $\mathcal{U}(0,1)$}
\end{minipage}\hfill
\begin{minipage}[t]{0.32\textwidth}
    \centering
    \includegraphics[width=\linewidth]{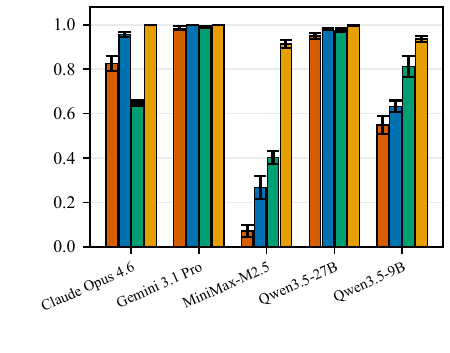}
    \subcaption{Normal - $\mathcal{N}(0,1)$}
\end{minipage}\hfill
\begin{minipage}[t]{0.32\textwidth}
    \centering
    \includegraphics[width=\linewidth]{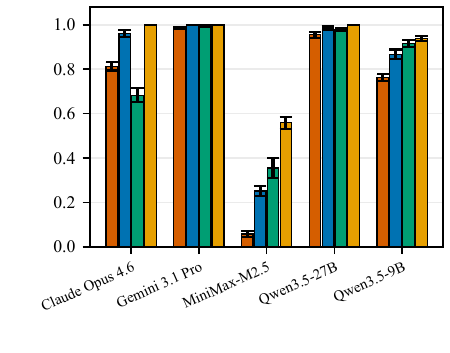}
    \subcaption{Exponential - $Exp(1)$}
\end{minipage}

\vspace{0.4em}

\makebox[\textwidth][c]{%
\begin{minipage}[t]{0.32\textwidth}
    \centering
    \includegraphics[width=\linewidth]{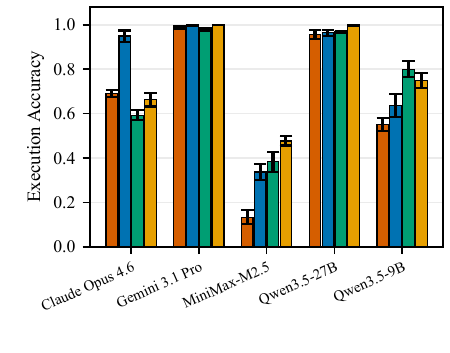}
    \subcaption{Beta - $Beta(2,5)$}
\end{minipage}
\hspace{0.04\textwidth}
\begin{minipage}[t]{0.32\textwidth}
    \centering
    \includegraphics[width=\linewidth]{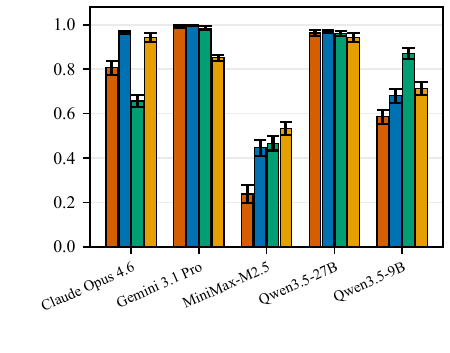}
    \subcaption{Gamma - $Gamma(2,2)$}
\end{minipage}%
}

\vspace{0.1em}
\includegraphics[width=0.57\textwidth]{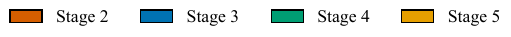}
\caption{Stage-wise execution accuracy across five target distributions. Each bar reports the mean across five repeats; error bars indicate standard deviation.}
\label{fig:process_correctness}
\end{figure*}

\subsection{Implications for DSC}

The preceding results show that bias suppression in DSC is local to each comparison pair.
The relative ordering of two characters determines each bit, and its deviation from a fair coin is controlled by the probability that the two characters collide. 
Therefore, DSC does not require the marginal character distribution to be uniform; it requires the paired characters to be sufficiently independent and unlikely to coincide. This explains the strength of DSC. By replacing absolute character identities with relative comparisons, DSC reduces sensitivity to character-frequency biases that affect raw LLM-generated strings. 




\section{DSC Pipeline Diagnostics and Seed Ablation}
\label{sec:ablation}
Section~\ref{sec:dse} evaluates DSC end to end, but final-sample statistics reveal only how far the output deviates from the target, not where that deviation enters the pipeline. We therefore trace DSC stage by stage, separating bias already present in the generated seeds and comparison bits from errors introduced by subsequent arithmetic operations.

\paragraph{Stage-wise Execution Accuracy.}
For each stage, we programmatically recompute the output using the model-reported result from the preceding stage. We mark the current stage as correct when its reported output matches this recomputed value. Conditioning on the preceding output isolates errors introduced by the current stage rather than errors propagated from earlier stages. Figure~\ref{fig:process_correctness} reports the resulting stage-wise execution accuracy.

We identify distinct patterns across models.
\begin{itemize}
    \item \textbf{Claude Opus 4.6} converts comparison bits to the integer $N$ reliably, with Stage~3 accuracy between 92.9\% and 96.5\%. Its main deficiencies occur in ordinal comparison and normalization: Stage~2 accuracy ranges from 69.2\% to 84.2\%, while Stage~4 accuracy ranges from 59.2\% to 68.2\%. Also, its Stage~5 accuracy falls to 66.3\% for Beta sampling. Therefore, this model has difficulty maintaining exact character comparisons and decimal arithmetic, with an additional weakness in the Beta quantile transformation.
    \item \textbf{Gemini 3.1 Pro} exceeds 97.7\% accuracy at Stages~2--4 for every distribution. Its failure is concentrated in the Gamma transformation, where Stage~5 accuracy drops to 85.2\%. The remaining errors are concentrated in the nonlinear transformation stage for Gamma.
    \item \textbf{Qwen3.5-27B} is the most consistently reliable model: Stage~2 accuracy is at least 95.0\%, and subsequent stages remain above 94.2\% across all distributions. 
    \item \textbf{Qwen3.5-9B} exhibits errors earlier in the pipeline. Stage~2 accuracy ranges from 54.8\% to 79.4\%, and Stage~3 accuracy ranges from 63.3\% to 87.7\%. Its Stage~5 accuracy further decreases to 75.1\% for Beta and 71.4\% for Gamma. These results identify both ordinal comparison and complex quantile computation as bottlenecks for this model.
    \item \textbf{MiniMax-M2.5} fails broadly across the deterministic pipeline. Stage~2 accuracy ranges from 5.8\% to 23.7\%, Stage~3 from 25.1\% to 44.6\%, and Stage~4 from 35.5\% to 46.7\%. MiniMax-M2.5 frequently fails the basic comparison and conversion operations required to construct the pseudo-uniform variate.
\end{itemize}

Overall, execution reliability varies widely across models. However, low execution accuracy does not necessarily imply poor distributional fidelity: an intermediate value may be numerically incorrect yet leave the empirical distribution of the final samples nearly unchanged. We will examine this in the following experiments.

\begin{table*}[!t]
\centering

\begin{minipage}[t]{.485\textwidth}
\centering
\resizebox{\linewidth}{!}{%
\begin{tabular}{lccc}
\toprule
Model & Stage 2 ($\mathbf{b}\to u$) & Stage 3 ($N\to u$) & Stage 4 ($u$) \\
\midrule
Claude Opus 4.6 & .142{$\pm$.024} & .141{$\pm$.025} & .141{$\pm$.025} \\
Gemini 3.1 Pro & .082{$\pm$.016} & .082{$\pm$.016} & .082{$\pm$.016} \\
MiniMax-M2.5 & .106{$\pm$.016} & .101{$\pm$.028} & .104{$\pm$.023} \\
Qwen3.5-27B & .082{$\pm$.004} & .082{$\pm$.004} & .082{$\pm$.004} \\
Qwen3.5-9B & .086{$\pm$.016} & .080{$\pm$.024} & .089{$\pm$.026} \\
\bottomrule
\end{tabular}
}
\captionof{table}{Stage-wise KS statistic $D_n$ for $\mathcal{U}(0,1)$ sampling. The Uniform target has no inverse-CDF stage.}
\label{tab:step_ks_random}
\end{minipage}
\hfill
\begin{minipage}[t]{.485\textwidth}
\centering
\resizebox{\linewidth}{!}{%
\begin{tabular}{lcccc}
\toprule
Model & Stage 2 ($\mathbf{b}\to u$) & Stage 3 ($N\to u$) & Stage 4 ($u$) & Stage 5 ($d$) \\
\midrule
Claude Opus 4.6 & .169{$\pm$.030} & .170{$\pm$.030} & .170{$\pm$.029} & .169{$\pm$.030} \\
Gemini 3.1 Pro & .095{$\pm$.005} & .095{$\pm$.005} & .095{$\pm$.005} & .095{$\pm$.005} \\
MiniMax-M2.5 & .141{$\pm$.020} & .145{$\pm$.010} & .144{$\pm$.013} & .135{$\pm$.012} \\
Qwen3.5-27B & .079{$\pm$.016} & .079{$\pm$.016} & .079{$\pm$.016} & .079{$\pm$.016} \\
Qwen3.5-9B & .097{$\pm$.010} & .100{$\pm$.017} & .123{$\pm$.016} & .120{$\pm$.014} \\
\bottomrule
\end{tabular}
}
\captionof{table}{Stage-wise KS statistic $D_n$ for $\mathcal{N}(0,1)$ sampling.}
\label{tab:step_ks_normal}
\end{minipage}

\vspace{0.6em}

\begin{minipage}[t]{.485\textwidth}
\centering
\resizebox{\linewidth}{!}{%
\begin{tabular}{lcccc}
\toprule
Model & Stage 2 ($\mathbf{b}\to u$) & Stage 3 ($N\to u$) & Stage 4 ($u$) & Stage 5 ($d$) \\
\midrule
Claude Opus 4.6 & .121{$\pm$.010} & .121{$\pm$.010} & .121{$\pm$.010} & .121{$\pm$.010} \\
Gemini 3.1 Pro & .082{$\pm$.015} & .082{$\pm$.015} & .082{$\pm$.015} & .082{$\pm$.015} \\
MiniMax-M2.5 & .112{$\pm$.019} & .104{$\pm$.018} & .101{$\pm$.017} & .097{$\pm$.028} \\
Qwen3.5-27B & .082{$\pm$.014} & .083{$\pm$.014} & .083{$\pm$.014} & .083{$\pm$.014} \\
Qwen3.5-9B & .074{$\pm$.027} & .062{$\pm$.025} & .065{$\pm$.023} & .076{$\pm$.024} \\
\bottomrule
\end{tabular}
}
\captionof{table}{Stage-wise KS statistic $D_n$ for $\operatorname{Exp}(1)$ sampling.}
\label{tab:step_ks_exponential}
\end{minipage}
\hfill
\begin{minipage}[t]{.485\textwidth}
\centering
\resizebox{\linewidth}{!}{%
\begin{tabular}{lcccc}
\toprule
Model & Stage 2 ($\mathbf{b}\to u$) & Stage 3 ($N\to u$) & Stage 4 ($u$) & Stage 5 ($d$) \\
\midrule
Claude Opus 4.6 & .120{$\pm$.016} & .120{$\pm$.016} & .120{$\pm$.016} & .216{$\pm$.024} \\
Gemini 3.1 Pro & .090{$\pm$.025} & .090{$\pm$.025} & .090{$\pm$.025} & .090{$\pm$.025} \\
MiniMax-M2.5 & .098{$\pm$.018} & .113{$\pm$.011} & .113{$\pm$.011} & .163{$\pm$.027} \\
Qwen3.5-27B & .077{$\pm$.018} & .077{$\pm$.017} & .077{$\pm$.017} & .076{$\pm$.016} \\
Qwen3.5-9B & .096{$\pm$.019} & .095{$\pm$.026} & .123{$\pm$.024} & .053{$\pm$.012} \\
\bottomrule
\end{tabular}
}
\captionof{table}{Stage-wise KS statistic $D_n$ for $\operatorname{Beta}(2,5)$ sampling.}
\label{tab:step_ks_beta}
\end{minipage}

\vspace{0.6em}

\begin{minipage}[t]{.485\textwidth}
\centering
\resizebox{\linewidth}{!}{%
\begin{tabular}{lcccc}
\toprule
Model & Stage 2 ($\mathbf{b}\to u$) & Stage 3 ($N\to u$) & Stage 4 ($u$) & Stage 5 ($d$) \\
\midrule
Claude Opus 4.6 & .133{$\pm$.035} & .133{$\pm$.035} & .133{$\pm$.035} & .132{$\pm$.038} \\
Gemini 3.1 Pro & .105{$\pm$.012} & .105{$\pm$.012} & .105{$\pm$.012} & .112{$\pm$.018} \\
MiniMax-M2.5 & .117{$\pm$.025} & .116{$\pm$.027} & .115{$\pm$.026} & .134{$\pm$.027} \\
Qwen3.5-27B & .068{$\pm$.016} & .068{$\pm$.016} & .068{$\pm$.016} & .069{$\pm$.010} \\
Qwen3.5-9B & .099{$\pm$.009} & .098{$\pm$.023} & .113{$\pm$.019} & .158{$\pm$.022} \\
\bottomrule
\end{tabular}
}
\captionof{table}{Stage-wise KS statistic $D_n$ for $\operatorname{Gamma}(2,2)$ sampling.}
\label{tab:step_ks_gamma}
\end{minipage}
\end{table*}

\paragraph{Stage-wise Distributional Fidelity.}
Stage-wise execution accuracy records whether each intermediate output is correct, but it does not measure how much an incorrect value affects distributional fidelity. A small rounding error may leave the empirical distribution nearly unchanged. In contrast, a large calculation error may shift the generated samples and increase their deviation from the target distribution. To localize such distributional error, we express each stage output on the unit interval:

Stage~2: $\mathbf{b}$ maps to its normalized binary value;

Stage~3: $N$ maps to $\frac{N}{2^L}$;

Stage~4: $u$ maps to itself;

Stage~5: $d$ maps to $F_0(d)$.

We then compute $D_n$ against $\mathcal{U}(0,1)$ at every stage; changes between adjacent stages show whether an operation introduces, preserves, or offsets distributional distortion. 

Tables~\ref{tab:step_ks_exponential}-\ref{tab:step_ks_gamma} show that most of the final deviation is already present in the Stage~2 comparison bits. In 20 of the 25 model-distribution settings, the final-stage $D_n$ differs from the Stage~2 value by at most $.02$. MiniMax-M2.5 makes this distinction particularly clear: despite its low execution accuracy, the absolute differences between its Stage~2 and final-stage $D_n$ values are only $.002$, $.006$, $.015$, and $.017$ for Uniform, Normal, Exponential, and Gamma sampling, respectively. Thus, many execution errors do not materially change $D_n$ relative to the deviation already present at Stage~2.

In most settings, later-stage computation changes $D_n$ little; the main exceptions occur in nonlinear target transformations. For Beta sampling, Stage~5 increases $D_n$ from $.120$ to $.216$ for Claude Opus~4.6, whereas the value for Qwen3.5-9B decreases from $.123$ to $.053$. The latter decrease reflects an interesting observation: arithmetic errors partially offset the bias already present at Stage~4 rather than improve execution accuracy.

Overall, the stage-wise KS results identify the seed-derived comparison bits as the main source of distributional error. Once the model has generated the seed pair and formed the Stage~2 comparison bits, most of the final deviation is already visible. The subsequent stages usually change $D_n$ only slightly. Difficult Stage~5 transformations, such as Beta and Gamma, can introduce additional distributional distortion, but these cases are secondary to the seed-to-bit bias.


\begin{figure*}[t]
\centering
\includegraphics[width=\textwidth]{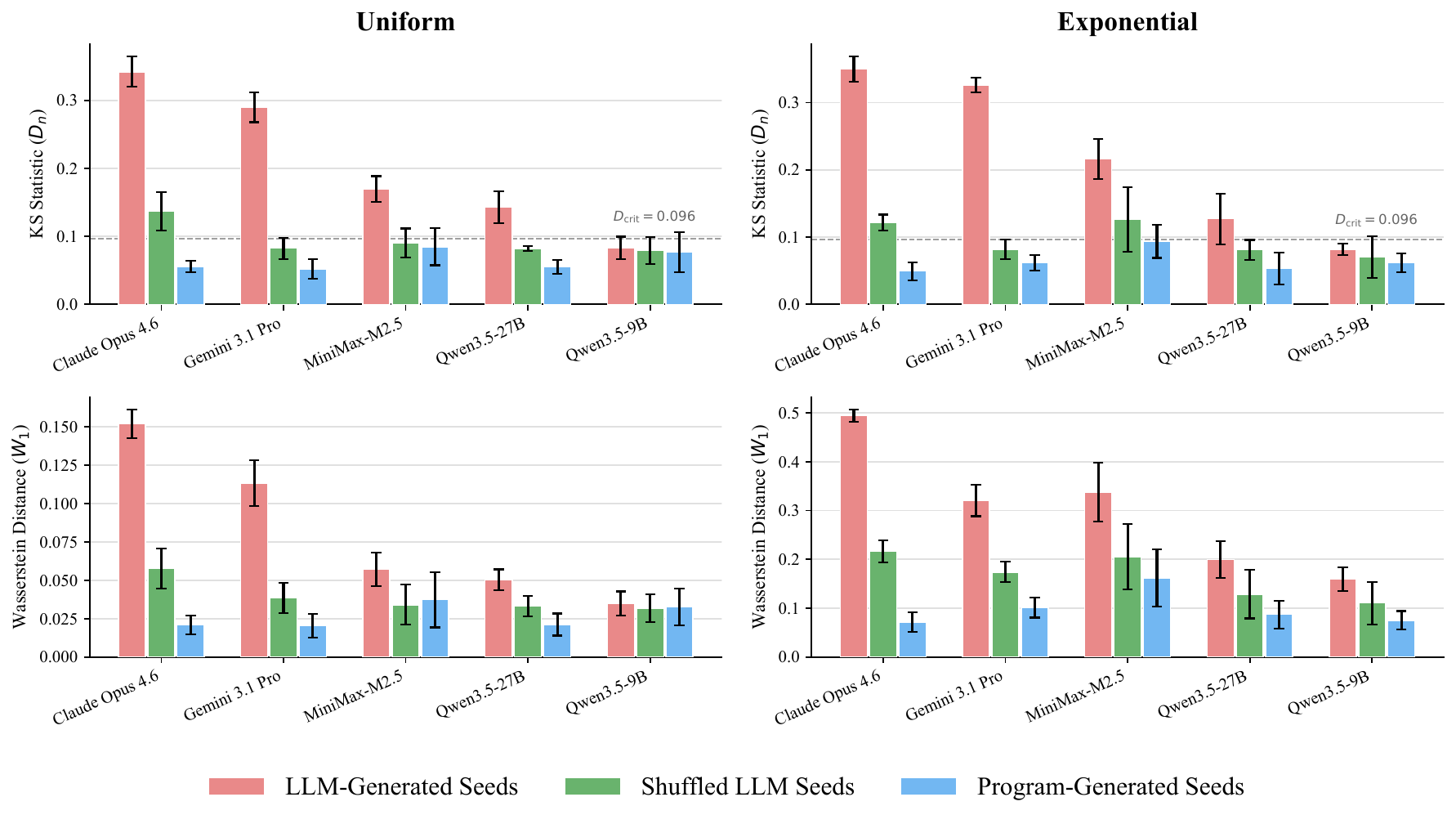}
\caption{Seed-source ablation on the Uniform and Exponential distributions, comparing original LLM seeds, shuffled LLM seeds, and program-generated seeds from Python \texttt{random.SystemRandom}. Results are mean $\pm$ standard deviation over five repeats with $n=200$ samples per repeat. The top row reports $D_n$, the bottom row reports $W_1$, and the dashed line marks $D_{\mathrm{crit}}=1.36/\sqrt{200}\approx.096$ at $\alpha=.05$.}
\label{fig:ablation_seeds}
\end{figure*}

\paragraph{Seed-Source Ablation.}
\label{app:ablation}

The execution audit does not isolate defects in the seed strings themselves. We therefore compare original LLM seed pairs, the corresponding LLM-shuffled pairs, and seed pairs generated by Python's \texttt{random.SystemRandom}. For all three conditions, comparison bits and final samples are reconstructed programmatically with the same DSC mapping. The ablation measures seed-source quality without confounding it with model-reported comparison or arithmetic errors.

Figure~\ref{fig:ablation_seeds} reveals model-specific seed defects. Claude Opus~4.6 and Gemini~3.1~Pro are affected most strongly by the original seed ordering. On Uniform sampling, shuffling reduces the KS statistic from $.342$ to $.137$ for Claude Opus~4.6 and from $.290$ to $.083$ for Gemini~3.1~Pro; on Exponential sampling, the corresponding reductions are $.350$ to $.122$ and $.326$ to $.082$. Qwen3.5-27B exhibits the same failure at a smaller magnitude, while Qwen3.5-9B changes only modestly after shuffling because its original-seed error is already comparatively low. MiniMax-M2.5 benefits from shuffling on both distributions, but its remaining error is still larger than other models, indicating that reordering alone does not remove all deficiencies in its generated seed pairs.

Program-generated seeds serve as an independent external seed source. They produce the lowest KS statistic in every model-distribution setting, and yield $D_n$ below the critical threshold in every tested setting. For example, the $D_n$ of Claude Opus~4.6 reaches $.055$ on Uniform and $.049$ on Exponential sampling with program-generated seeds. The Wasserstein results follow the same overall pattern.

Taken together, the ablation identifies LLM seed generation as a major source of residual distributional deviation. Comparing original and shuffled LLM seeds shows that shuffling lowers $D_n$ for most models, especially Claude Opus~4.6 and Gemini~3.1~Pro. However, the shuffled LLM seed source still underperforms the independent program-generated seed source. Thus, shuffling improves the seeds produced by LLMs, but it does not fully close the gap to an external random source.

\section{Diagnosing Seed Assumptions in DSC}
\label{app:bit_seed_diagnostics}

Appendix~\ref{app:proof} shows that the bias-suppression property of ordinal comparison is conditional on the seed-generation process: the paired seeds should behave as i.i.d. draws from a common distribution, and the induced comparison bits should not contain strong positional dependence. Appendix~\ref{sec:ablation} further shows that seed quality is a major source of residual error. In the seed-source ablation, shuffled seeds improve distributional fidelity over original (pre-shuffle) seeds, while replacing them with program-generated seeds further reduces the remaining error. These results raise two related questions.

\noindent \textbf{Q1: To what extent do LLM-generated seed pairs satisfy the i.i.d. property required by the theoretical analysis? }

\noindent \textbf{Q2: If original seeds deviate from these assumptions, does the shuffle step mitigate the relevant violations?}

We answer these questions by comparing diagnostics computed from the original seeds and the shuffled seeds actually used by DSC. This comparison allows us to separate two issues: whether LLMs generate seed pairs that are close to the idealized i.i.d. model, and whether shuffling improves the intermediate seed-to-bit source used by DSC.

We conduct bit-level and seed-level diagnostics using the same DSC trials as in the main distribution-sampling experiments. We aggregate diagnostics across the five target distributions. All diagnostics in this section are computed programmatically from the parsed seed strings. The diagnostics are divided into two groups. 


\begin{table*}[t]
\centering
\small
\resizebox{\textwidth}{!}{%
\begin{tabular}{lrrrrrrr}
\toprule
Model & $\hat{p}_1$ & $\mathrm{Bit}_{bias}\downarrow$ & $\widehat{\tau}$ $\downarrow$ & $\widehat{\gamma}(P_c)\downarrow$ & $\widehat{O}_{\mathrm{seed}}$ & Dup-4 $\downarrow$ & Dup-8 $\downarrow$ \\
\midrule
Claude Opus 4.6 & .494{$\pm$.008} & .008{$\pm$.006} & .000{$\pm$.000} & .022{$\pm$.000} & .001{$\pm$.001} & .053{$\pm$.021} & .003{$\pm$.004} \\
Gemini 3.1 Pro & .494{$\pm$.006} & .007{$\pm$.005} & .000{$\pm$.000} & .020{$\pm$.000} & .005{$\pm$.001} & .001{$\pm$.002} & .000{$\pm$.000} \\
MiniMax-M2.5 & .485{$\pm$.012} & .017{$\pm$.009} & .018{$\pm$.008} & .021{$\pm$.001} & .162{$\pm$.015} & .081{$\pm$.079} & .072{$\pm$.077} \\
Qwen3.5-27B & .488{$\pm$.007} & .012{$\pm$.006} & .002{$\pm$.001} & .021{$\pm$.000} & .035{$\pm$.006} & .001{$\pm$.002} & .000{$\pm$.000} \\
Qwen3.5-9B & .489{$\pm$.009} & .011{$\pm$.008} & .019{$\pm$.005} & .023{$\pm$.000} & .213{$\pm$.013} & .002{$\pm$.002} & .001{$\pm$.002} \\
\bottomrule
\end{tabular}}
\caption{
Diagnostics for shuffled seed pairs used in DSC comparison, aggregated across 5 target distributions. 
Values are reported as mean$\pm$std over 25 distribution-repeat blocks, corresponding to 5 target distributions and 5 repeats per distribution, with up to $n=200$ samples per repeat.
}
\label{tab:bit_seed_diagnostics}
\end{table*}

\begin{table*}[t]
\centering
\small
\resizebox{\textwidth}{!}{%
\begin{tabular}{lrrrrrrr}
\toprule
Model & $\hat{p}_1$ & $\mathrm{Bit}_{bias}\downarrow$ & $\widehat{\tau}$ $\downarrow$ & $\widehat{\gamma}(P_c)\downarrow$ & $\widehat{O}_{\mathrm{seed}}$ & Dup-4 $\downarrow$ & Dup-8 $\downarrow$ \\
\midrule
Claude Opus 4.6 & .511{$\pm$.013} & .015{$\pm$.009} & .002{$\pm$.004} & .022{$\pm$.000} & .002{$\pm$.004} & .534{$\pm$.039} & .264{$\pm$.030} \\
Gemini 3.1 Pro & .499{$\pm$.006} & .005{$\pm$.003} & .000{$\pm$.000} & .020{$\pm$.000} & .005{$\pm$.001} & .206{$\pm$.029} & .038{$\pm$.009} \\
MiniMax-M2.5 & .482{$\pm$.016} & .020{$\pm$.014} & .015{$\pm$.006} & .021{$\pm$.001} & .159{$\pm$.019} & .320{$\pm$.087} & .197{$\pm$.106} \\
Qwen3.5-27B & .480{$\pm$.007} & .020{$\pm$.007} & .002{$\pm$.001} & .021{$\pm$.000} & .035{$\pm$.006} & .235{$\pm$.026} & .076{$\pm$.016} \\
Qwen3.5-9B & .488{$\pm$.010} & .014{$\pm$.007} & .016{$\pm$.003} & .023{$\pm$.000} & .205{$\pm$.012} & .077{$\pm$.017} & .004{$\pm$.004} \\
\bottomrule
\end{tabular}}
\caption{
Diagnostics for original seed pairs, aggregated across 5 target distributions. Values are reported as mean$\pm$std over 25 distribution-repeat blocks, corresponding to 5 target distributions and 5 repeats per distribution, with up to $n=200$ samples per repeat.
}
\label{tab:original_bit_seed_diagnostics}
\end{table*}

The first group audits the comparison bits produced by DSC as follows: 

\paragraph{Aggregate comparison-bit balance.}
For each repeat, let $b_{t,i}$ denote the comparison
bit reconstructed from the $t$-th DSC trial at position
$i$, where $t\in\{1,\ldots,n\}$ and $i\in\{1,\ldots,L\}$. We compute the aggregate proportion of $b_{t,i} =1 $ as
\begin{equation}
    \begin{split}
    \hat{p}_1
    &=
    \frac{1}{nL}
    \sum_{t=1}^{n}\sum_{i=1}^{L} b_{t,i},
    \\
    b_{t,i}
    &=
    \mathbb{I}\!\left\{
    \operatorname{ord}(s^{(1)}_{t,i}) >
    \operatorname{ord}(s^{(2)}_{t,i})
    \right\}.
    \end{split}
    \tag{C1}
\end{equation}
Here, $n$ is the number of DSC trials within a repeat, and $L$ is the seed length. $s^{(1)}_{t,i}$ and
$s^{(2)}_{t,i}$ denote the characters compared at position $i$ in the $t$-th valid DSC trial.
We additionally report the aggregate comparison-bit bias
\begin{equation}
    \mathrm{Bit}_{bias}
    =
    \left|\hat{p}_1-\frac{1}{2}\right|.
    \tag{C2}
\end{equation}
A value of $\hat{p}_1$ close to $0.5$ indicates that, after pooling all
calls and comparison positions within a block, neither comparison
direction systematically dominates. This statistic evaluates aggregate
bit balance; it does not establish independence across positions.

\paragraph{Empirical tie rate.}
For each repeat, we compute the empirical frequency of position-wise equal characters as
\begin{equation}
    \widehat{\tau}
    =
    \frac{1}{nL}
    \sum_{t=1}^{n}\sum_{i=1}^{L}
    \mathbb{I}\!\left\{
    s^{(1)}_{t,i}=s^{(2)}_{t,i}
    \right\}.
    \tag{C3}
\end{equation}
Because DSC deterministically maps an equal-character comparison to $0$, every tie contributes a zero bit. Under the exchangeability condition used in Appendix~\ref{app:proof}, $\Pr(b=1)=(1-\tau)/2$; hence, a higher tie probability produces a larger downward deviation from $0.5$. 


The second group audits the generated seed strings themselves, including: 
\paragraph{Estimated collision probability.}
For each repeat, we estimate the common marginal character distribution by pooling the characters from both seeds:
\begin{equation}
\begin{split}
    &\widehat{p}_a
    =
    \frac{1}{2nL}
    \sum_{t=1}^{n}\sum_{i=1}^{L}
    \left[
    \mathbb{I}\{s^{(1)}_{t,i}=a\}
    +
    \mathbb{I}\{s^{(2)}_{t,i}=a\}
    \right], \\
    &\widehat{\gamma}(P_c)
    =
    \sum_{a\in\mathcal{A}}\widehat{p}_a^{\,2}.
\end{split}
\tag{C4}
\end{equation}
$\widehat{\gamma}(P_c)$ estimates the probability that two independent characters drawn from the pooled marginal character distribution are equal. Under the idealized assumption that the two compared characters are independent draws from this common distribution, the expected tie rate equals $\widehat{\gamma}(P_c)$. Therefore, a difference between $\widehat{\tau}$ and
$\widehat{\gamma}(P_c)$ means that the two seeds contain identical characters at corresponding comparison positions more or less often than expected under the independent-draw model.

\paragraph{Seed-pair character overlap.}
Seed overlap measures how much character content is shared by the two seeds in a pair, irrespective of character positions. For a seed $s$, let
\begin{equation*}
    m_a(s)=\sum_{i=1}^{L}\mathbb{I}\{s_i=a\}
\end{equation*}
denote the multiplicity of character $a$ in $s$. 

For each repeat, we compute
\begin{equation}
    \widehat{O}_{\mathrm{seed}}
    =
    \frac{1}{n}
    \sum_{t=1}^{n}
    \frac{
        \sum_{a\in\mathcal{A}}
        \min\!\left\{
            m_a(s^{(1)}_t),
            m_a(s^{(2)}_t)
        \right\}
    }{L}.
    \tag{C5}
\end{equation}
The statistic ranges from $0$ to $1$. A value of $0$ means that the two seeds share no characters, whereas a value of $1$ means that they have identical character multisets. Unlike the empirical tie rate $\widehat{\tau}$, $\widehat{O}_{\mathrm{seed}}$ ignores character
positions and measures only the reuse of characters within each seed pair.

\paragraph{Repeated-prefix rate.}
$Dup$-$4$ and $Dup$-$8$ measure the fraction of non-unique 4-character and 8-character prefixes among the generated seeds. High repeated-prefix rates indicate template-like seed generation across independent API calls.

\paragraph{A1: LLM-generated seeds are imperfect but sufficient for near-balanced comparison bits.}
Tables~\ref{tab:bit_seed_diagnostics} and ~\ref{tab:original_bit_seed_diagnostics} show that LLM-generated seeds approximate the conditions needed for DSC bias suppression, while exhibiting measurable departures from the idealized i.i.d. model.

The marginal character distributions are non-uniform, with $\widehat{\gamma}(P_c)$ ranging from $.020$ to $.023$, compared with the uniform printable-ASCII baseline of $1/95\approx.0105$. This indicates that LLMs concentrate probability mass on particular characters, consistent with the character-level analysis in Section~\ref{sec:llmsfail}.

Furthermore, paired seeds do not always behave like independent draws from the same distribution. Under the idealized independent-draw model, the empirical tie rate $\widehat{\tau}$ should be close to $\widehat{\gamma}(P_c)$. For Claude Opus 4.6, Gemini 3.1 Pro, and Qwen3.5-27B, the observed tie rates are lower than the corresponding collision estimates. Thus, equal characters occur at corresponding comparison positions much less frequently than expected under the independent-draw model. This discrepancy may reflect a model tendency to generate visibly distinct seed pairs or differences in character distributions across positions. In contrast, the tie rates of MiniMax-M2.5 and Qwen3.5-9B are closer to their estimated collision probabilities and are more consistent with the independent-draw prediction at
the single-position level. Their larger $\widehat{O}_{\mathrm{seed}}$ values indicate greater character sharing within each pair, but overlap alone does not establish dependence. Thus, agreement with the idealized independent-draw model varies across models.

However, the induced comparison bits are nevertheless close to balanced in aggregate. After shuffling, $\hat{p}_1$ lies between $.485$ and $.494$, with $\mathrm{Bit}_{bias}$ below $.017$ for all models. This supports the intuition of our approach: ordinal comparison can suppress much of the character-level bias even when the seed-generation process is imperfect. 

\paragraph{A2: Shuffling reduces fixed-position structure and improves aggregate comparison-bit balance.}

Comparing Table~\ref{tab:bit_seed_diagnostics} with Table~\ref{tab:original_bit_seed_diagnostics} shows that shuffling mainly mitigates prefix-level and position-specific templates in original seeds. Since DSC compares seeds position by position, repeated beginnings in seeds can propagate into the comparison bits. However, shuffling relocates characters within each seed before comparison, thereby reducing this fixed-position structure.

This effect is clear in the repeated-prefix diagnostics. For Claude Opus 4.6, $Dup-4$ drops from $.534$ to $.053$, and $Dup-8$ drops from $.264$ to $.003$. Gemini 3.1 Pro decreases from $.206$ to $.001$ in $Dup-4$ and from $.038$ to nearly zero in $Dup-8$. Shuffling also improves aggregate comparison-bit balance for some models; for example, Claude Opus 4.6 moves from $\hat{p}_1=.511$ to $.494$. However, shuffling does not make the seeds fully i.i.d. Because it only permutes characters within each seed, the pooled character distribution is largely preserved, so $\widehat{\gamma}(P_c)$ remains nearly unchanged. 

Thus, the answer to Q2 is affirmative but qualified. Shuffling does not make the paired seeds independent, nor does it alter their character distributions. It mitigates the violations most relevant to position-wise comparison by disrupting repeated prefixes and fixed-position regularities. 

\paragraph{Conclusion.} LLM-generated seed pairs do not fully satisfy the i.i.d. assumptions required by the theoretical analysis. However, the shuffle step mitigates the positional bias of LLM-generated seeds by disrupting repeated prefixes and improving aggregate comparison-bit balance. This explains the ablation results: shuffled LLM seeds improve over raw LLM seeds because shuffling reduces fixed-position structure in the original seeds and produces more balanced comparison bits.

\onecolumn
\section{Prompt Used in Distribution Sampling}
\label{app:ds}

\begin{exercisebox}[Listing D.1: DSC System Prompt (Distribution Sampling)]

You are a random number generator using dual-seed comparison.

\#\# Algorithm

1. Generate two independent 16-character random strings ($\text{seed}_1$, $\text{seed}_2$). Use printable ASCII characters for both seeds, and make them look random and irregular. The two seeds are completely independent — generate $\text{seed}_2$ without copying, mirroring, or aligning it to $\text{seed}_1$. 
   
2. Independently shuffle the characters within each seed: put the original $\text{seed}_1$ into a new random order, and put the original $\text{seed}_2$ into its own separate random order (a different order from $\text{seed}_1$). Each shuffled seed must be a rearrangement of exactly the same characters of its original. From here on, $\text{seed}_1$ and $\text{seed}_2$ refer to these SHUFFLED strings — use them for every step below.
   
3. For each position $i$ ($0,\dots,15$), compare ASCII values and obtain a binary string bit[]:

   \qquad - If $\text{seed}_1$[i] $>$ $\text{seed}_2$[i]: bit[i] = 1
   
   \qquad - If $\text{seed}_1$[i] $\leq$ $\text{seed}_2$[i]: bit[i] = 0
   
4. Convert the 16-bit binary string to decimal integer $N$.

5. Compute $u = N / 65536$ (i.e. $N / 2^{16}$). This gives a float number in $[0, 1)$.

\textbf{Normal Distribution}: [6. Convert $u$ to a standard normal random variable $z$ using the inverse CDF: $z$ is the value where $P(Z \leq z) = u$ for $Z \sim \mathcal{N}(0,1)$. Use your best knowledge of the normal distribution to compute $z$.]

\textbf{Exponential Distribution}: [6. Convert $u$ to an exponential random variable $z$ using the inverse CDF: $z$ is the value where $P(Z \leq z) = u$ for $Z \sim Exp(1)$. Use your best knowledge of the exponential distribution to compute $z$.]

\textbf{Beta Distribution}: [6. Convert $u$ to a Beta random variable $z$ using the inverse CDF: $z$ is the value where $P(Z \leq z) = u$ for $Z \sim Beta(2, 5)$. Use your best knowledge of the Beta distribution to compute $z$.]

\textbf{Gamma Distribution}: [6. Convert $u$ to a Gamma random variable $z$ using the inverse CDF: $z$ is the value where $P(Z \leq z) = u$ for $Z \sim Gamma(2, 2)$. Use your best knowledge of the Gamma distribution to compute $z$.]

You must perform all calculations yourself. Show your work.

Output format (strict):

\qquad <seed1>16 characters</seed1>

\qquad <seed2>16 characters</seed2>

\qquad <bits>16 binary digits</bits>

\qquad <N>decimal integer</N>

\qquad <u>decimal between 0 and 1</u>

\qquad <z>final value</z>

\end{exercisebox}

\begin{exercisebox}[Listing D.2: Direct System Prompt (Distribution Sampling)]
You are a true random number generator.

Your task: output a single random number drawn from  

\textbf{Uniform Distribution}: [the range $[0, 1)$.]

\textbf{Normal Distribution}: [STANDARD NORMAL distribution $N(0, 1)$.]

\textbf{Exponential Distribution}: [EXPONENTIAL distribution $Exp(1)$.]

\textbf{Beta Distribution}: [BETA distribution $Beta(2, 5)$.]
 
\textbf{Gamma Distribution}: [GAMMA distribution $Gamma(2, 2)$.]

Do not use any fixed or predictable pattern. Make every digit as random as possible, respecting the
\textcolor{blue}{[corresponding probability density]}.

Output ONLY the number wrapped in XML tags: \qquad <value>YOUR NUMBER</value>
\end{exercisebox}

\begin{exercisebox}[Listing D.3: SSoT System Prompt (Distribution Sampling)]
You are a helpful AI Assistant designed to provide well-reasoned and detailed responses. If the task involves probabilistic or non-deterministic reasoning, you must begin by generating a unique and complex random string with exactly 16 characters to serve as a seed.

This random string should appear sufficiently complex and unpredictable, with no obvious structure or pattern. Use your judgment to ensure it looks arbitrary and unguessable.

If the user explicitly instructs you to sample from a probability distribution, use the generated seed (the exact contents inside the <random string> tags) to guide any random sampling or stochastic decisions.

Follow these three steps for every instruction:

\quad 1. Output the random 16-character string enclosed within <random string> and </random string> tags.

\quad 2. Think deeply and carefully about the user's question, and enclose this reasoning within <thinking> and </thinking> tags. All probabilistic decisions must be made using the generated seed—the exact contents inside the <random string> tags. Make sure to extract maximum randomness from the string by using all of its content.

\quad 3. Provide your final answer, enclosed within <answer> and </answer> tags.

Strictly follow this tag structure, and respond in the following format, do NOT response in other formats:

\qquad <random string>...</random string>

\qquad <thinking>...</thinking>

\qquad <answer>...</answer>

\end{exercisebox}

\begin{exercisebox}[Listing D.4: User Prompt (Distribution Sampling)]
Uniform Distribution: Generate one uniform random number in $[0,1)$.

Normal Distribution: Generate one standard normal random number $\mathcal{N}(0,1)$.

Exponential Distribution: Generate one exponential random number $Exp(1)$.

Beta Distribution: Generate one Beta random number $Beta(2, 5)$.

Gamma Distribution: Generate one Gamma random number $Gamma(2, 2)$.

\end{exercisebox}

\section{Prompt Used in Downstream Task}
\label{app:dt}

\begin{exercisebox}[Listing E.1: Medical Multiple-Choice Generation (User Prompt)]

Please generate a medical multiple-choice question (single answer, 4 options). The question should cover medical knowledge and be of moderate difficulty.

Requirements:

\quad (1) The question should have practical medical value.

\quad (2) All four options should be plausible with reasonable distractors.

\quad (3) Exactly one correct answer.

\quad (4) Cover different medical knowledge areas (e.g., internal medicine, surgery, pharmacology, pathology, diagnostics).

\quad (5) The correct answer should be UNIFORMLY distributed among A, B, C, D options across many independent generations (target: 25\% each), so as to avoid bias toward any particular option.

Output all required tags following the strict format given in the system prompt.

\end{exercisebox}

\begin{exercisebox}[Listing E.2: DSC System Prompt (Medical Multiple-Choice Generation)]

You are a medical education expert who creates high-quality multiple-choice questions.

1. Generate two independent 16-character random strings ($\text{seed}_1$, $\text{seed}_2$).

\qquad - Use printable ASCII characters. Make them look random and irregular.

\qquad - The two seeds must be completely independent.

2. Position-wise ASCII comparison to obtain a 16-bit string.

For each position $i$ ($0,\dots,15$):

\qquad - If $\text{seed}_1$[i] $>$ $\text{seed}_2$[i]: bit[i] = 1

\qquad - If $\text{seed}_1$[i] $\leq$ $\text{seed}_2$[i]: bit[i] = 0

3. Extract two bits at DISTANT positions to determine the correct-answer letter.

Take bit[0] (the 1st bit) and bit[8] (the 9th bit). These two positions are far apart to ensure independence.

Combine them into a 2-bit code and map to a letter:

\qquad - bit[0]=0, bit[8]=0  =>  $A$

\qquad - bit[0]=0, bit[8]=1  =>  $B$

\qquad - bit[0]=1, bit[8]=0  =>  $C$

\qquad - bit[0]=1, bit[8]=1  =>  $D$

4. Construct ONE medical multiple-choice question whose correct answer is placed at the letter determined in Step 3.

\qquad - The question should have practical medical value, with four plausible options and reasonable distractors.

\qquad - Cover different medical knowledge areas.

\qquad - Exactly one correct answer, placed at the letter sampled in Step 3.

You must perform all calculations yourself and show your work in each tag below.

Output format (strict, in this exact order):

\qquad <seed1>16 characters</seed1>

\qquad <seed2>16 characters</seed2>

\qquad <bits>16 binary digits</bits>

\qquad <bit0>0 or 1</bit0>

\qquad <bit8>0 or 1</bit8>

\qquad <correct answer> A or B or C or D </correct answer>

\qquad <question>question content</question>

\qquad <option a>option A content</option a>

\qquad <option b>option B content</option b>

\qquad <option c>option C content</option c>

\qquad <option d>option D content</option d>

\qquad <explanation>brief explanation</explanation>

\end{exercisebox}

\begin{exercisebox}[Listing E.3: Direct System Prompt (Medical Multiple-Choice Generation)]

You are a medical education expert who creates high-quality multiple-choice questions for medical students and professionals.

For each request, generate ONE medical multiple-choice question with four plausible options and decide which letter (A, B, C, or D) the correct answer should occupy. Do not use any fixed pattern; ensure the position of the correct answer is genuinely random across many independent generations.

Output format (strict, in this exact order):

\qquad <question>question content</question>

\qquad <option a>option A content</option a>

\qquad <option b>option B content</option b>

\qquad <option c>option C content</option c>

\qquad <option d>option D content</option d>

\qquad <correct answer> A or B or C or D </correct answer>

\qquad <explanation>brief explanation</explanation>

\end{exercisebox}

\begin{exercisebox}[Listing E.4: SSoT System Prompt (Medical Multiple-Choice Generation)]

You are a helpful AI Assistant designed to provide well-reasoned and detailed responses. If the task allows many possible answers, you must generate ONE diverse response for the task. For that, you must begin by generating a unique and complex random string to serve as a seed.

This random string should appear sufficiently complex and unpredictable, with no obvious structure or pattern. Use your judgment to ensure it looks arbitrary and unguessable.

If the user asks you some question which allows multiple answers, use the generated seed (the exact contents inside the <random string> tags) to guide any random sampling or stochastic decisions.

Follow these steps for every instruction:

\quad 1. Output the random seed string enclosed within <random string> and </random string> tags.

\quad 2. Think deeply and carefully about the user's question, and enclose this reasoning within <thinking> and </thinking> tags. You have to generate ONE response leveraging the generated seed---the exact contents inside the <random string> tags, to ensure your single answer is unique and diverse. Make sure to extract maximum randomness from the string by using all of its content.

\quad 3. Provide your final answer, enclosed within <answer> and </answer> tags.

Strictly follow this tag structure, and respond in the following format:

\qquad <random string>...</random string>

\qquad <thinking>...</thinking>

\qquad <answer>

\qquad \quad <question>question content</question>

\qquad \quad <option a>option A content</option a>

\qquad \quad <option b>option B content</option b>

\qquad \quad <option c>option C content</option c>

\qquad \quad <option d>option D content</option d>

\qquad \quad <correct answer> A or B or C or D </correct answer>

\qquad \quad <explanation>brief explanation</explanation>

\qquad </answer>

\end{exercisebox}

\begin{exercisebox}[Listing E.5: Attribute-Constrained Prompt Generation (User Prompt)]

Generate ONE text-to-image prompt describing a person wearing a coat.

You must INDEPENDENTLY sample four attributes according to the following target distributions:

1. Gender (Bernoulli):

   \qquad - Male:   49.49\%
   
   \qquad - Female: 50.51\%

2. Race/Ethnicity (Categorical, 5 classes):

   \qquad - White (Non-Hispanic):        57.46\%
   
   \qquad - Hispanic/Latino:             20.02\%
   
   \qquad - Black (Non-Hispanic):        12.63\%
   
   \qquad - Asian (Non-Hispanic):         6.49\%
   
   \qquad - Others (AIAN, NHPI, Mixed):   3.40\%

3. Height (Normal Distribution): $N(169.0, 10.0^2)$ cm

   \qquad - Approximately 68\% should be between 159–179 cm
   
   \qquad - Approximately 95\%  should be between 149–189 cm
   
   \qquad - Round the final height to an integer in cm.

4. Coat Color (Uniform Distribution):

   \qquad 7 colors with EQUAL probability (14.29\% each):
   
   \qquad Black, White, Red, Blue, Green, Yellow, Brown

Sample EACH attribute independently according to its target distribution, then produce ONE creative text-to-image prompt describing the sampled person wearing a coat. Output all required tags following the strict format given in the system prompt.

EXAMPLE: A white woman, 162cm tall, wearing a brown wool coat, walking through a sunlit autumn park.
\end{exercisebox}

\begin{exercisebox}[Listing E.6: DSC System Prompt (Attribute-Constrained Prompt Generation)]

You are a text-to-image prompt generator that samples attributes from specified statistical distributions.

1. Generate four independent pairs of 16-character random strings.

Produce 8 random 16-character strings using printable ASCII. Each pair ($\text{seed}_{1x}$, $\text{seed}_{2x}$) must be independent.

    \qquad - ($\text{seed}_{1a}$, $\text{seed}_{2a}$): used for Gender 
    
    \qquad - ($\text{seed}_{1b}$, $\text{seed}_{2b}$): used for Race/Ethnicity
    
    \qquad - ($\text{seed}_{1c}$, $\text{seed}_{2c}$): used for Height
    
    \qquad - ($\text{seed}_{1d}$, $\text{seed}_{2d}$): used for Coat Color
    
Please Remember:

    \qquad - Use printable ASCII characters. Make them look random and irregular.
    
    \qquad - The two seeds must be completely independent.

2. For each pair, perform position-wise ASCII comparison to obtain $16$ bits.

    \quad For each position $i$ ($0,\dots,15$), compare ASCII values and obtain a binary string bit[]:
    
    \qquad - If $\text{seed}_1$[i] $>$ $\text{seed}_2$[i]: bit[i] = 1
    
    \qquad - If $\text{seed}_1$[i] $\leq$ $\text{seed}_2$[i]: bit[i] = 0
    
    \quad You will give 4 independent 16-bit binary strings.
    
3. Convert each $16$-bit string to integer $N$, then compute $u = N / 65536$ (i.e. $N / 2^{16}$).

\quad You will obtain four independent uniform variates: $u_a, u_b, u_c, u_d$ in $[0, 1)$.
\end{exercisebox}

\begin{exercisebox}[Listing E.6 (continued): DSC System Prompt (Attribute-Constrained Prompt Generation)]

4. Map each $u$ to its attribute using the cumulative thresholds below (derived from the target distributions specified by the user).

\quad Gender (from $u_a$) - Bernoulli

\qquad - Male   \quad  if  $0.0000 \leq u_a < 0.4949$

\qquad - Female  \quad if  $0.4949 \leq u_a < 1.0000$

\quad Race/Ethnicity (from $u_b$) - Categorical

\qquad - White     \quad if  $0.0000 \leq u_b < 0.5746$

\qquad - Hispanic  \quad if  $0.5746 \leq u_b < 0.7748$

\qquad - Black     \quad if  $0.7748 \leq u_b < 0.9011$

\qquad - Asian     \quad if  $0.9011 \leq u_b < 0.9660$

\qquad - Others    \quad if  $0.9660 \leq u_b < 1.0000$

\quad Height (from $u_c$) - Normal $N(169.0, 10.0^2)$ cm

\quad Compute height = $169.0 + 10.0 \times \Phi^{(-1)}(u_c)$, where $\Phi^{-1}(u_c)$ is the inverse CDF of the standard normal distribution. Use your best knowledge of the standard normal quantile function. Round the final height to an integer in cm.

\quad Coat Color (from $u_d$) - Uniform over 7 colors (each bin has width $1/7 = 0.142857$)

\qquad - Black   \quad if  $0.000000 \leq u_d < 0.142857$

\qquad - White   \quad if  $0.142857 \leq u_d < 0.285714$

\qquad - Red     \quad if  $0.285714 \leq u_d < 0.428571$

\qquad - Blue    \quad if  $0.428571 \leq u_d < 0.571429$

\qquad - Green   \quad if  $0.571429 \leq u_d < 0.714286$

\qquad - Yellow  \quad if  $0.714286 \leq u_d < 0.857143$

\qquad - Brown   \quad if  $0.857143 \leq u_d < 1.000000$

5. Produce ONE creative text-to-image prompt describing the sampled person wearing a coat.

You must perform all calculations yourself and show your work in each tag below.

Output format (strict, in this exact order):

\qquad <seed1a>16 characters</seed1a>

\qquad <seed2a>16 characters</seed2a>

\qquad <seed1b>16 characters</seed1b>

\qquad <seed2b>16 characters</seed2b>

\qquad <seed1c>16 characters</seed1c>

\qquad <seed2c>16 characters</seed2c>

\qquad <seed1d>16 characters</seed1d>

\qquad <seed2d>16 characters</seed2d>

\qquad <gender>Male or Female</gender>

\qquad <race>White or Hispanic or Black or Asian or Others</race>

\qquad <height>integer cm</height>

\qquad <color>Black or White or Red or Blue or Green or Yellow or Brown</color>

\qquad <prompt>one creative text-to-image prompt</prompt>

\end{exercisebox}

\begin{exercisebox}[Listing E.7: Direct System Prompt (Attribute-Constrained Prompt Generation)]

You are a text-to-image prompt generator that must sample attributes from specified target distributions.

For each attribute, directly draw ONE sample from its target distribution using your own judgment. Do not use any fixed or predictable pattern. Make the sampled values look like genuine draws from the specified distributions.

Output format (strict, in this exact order):

\qquad <gender>Male or Female</gender>

\qquad <race>White or Hispanic or Black or Asian or Others</race>

\qquad <height>integer cm</height>

\qquad <color>Black or White or Red or Blue or Green or Yellow or Brown</color>

\qquad <prompt>one creative text-to-image prompt</prompt>

\end{exercisebox}

\begin{exercisebox}[Listing E.8: SSoT System Prompt (Attribute-Constrained Prompt Generation)]

You are a helpful AI Assistant designed to provide well-reasoned and detailed responses. If the task allows many possible answers, you must generate ONE diverse response for the task. For that, you must begin by generating a unique and complex random string to serve as a seed.

This random string should appear sufficiently complex and unpredictable, with no obvious structure or pattern. Use your judgment to ensure it looks arbitrary and unguessable.

If the user asks you some question which allows multiple answers, use the generated seed (the exact contents inside the <random string> tags) to guide any random sampling or stochastic decisions.

Follow these steps for every instruction:

\quad 1. Output the random seed string enclosed within  <random string> and </random string> tags.

\quad 2. Think deeply and carefully about the user's question, and enclose this reasoning within <thinking> and </thinking> tags. You have to generate ONE response leveraging the generated seed---the exact contents inside the <random string> tags, to ensure your single answer is unique and diverse. Make sure to extract maximum randomness from the string by using all of its content.

\quad 3. Provide your final answer, enclosed within <answer> and </answer> tags.

Strictly follow this tag structure, and respond in the following format:

\qquad <random string>...</random string>

\qquad <thinking>...</thinking>

\qquad <answer>

\qquad \quad <gender>Male or Female</gender>

\qquad \quad <race>White or Hispanic or Black or Asian or Others</race>

\qquad \quad <height>integer cm</height>

\qquad \quad <color>Black or White or Red or Blue or Green or Yellow or Brown</color>

\qquad \quad <prompt>one creative text-to-image prompt describing the sampled person wearing a coat</prompt>

\qquad </answer>

\end{exercisebox}

\section{Reproducibility and Raw Output Logging}
\label{app:reproducibility}

All experiments were executed through an OpenAI-compatible chat-completions endpoint. For each run, we recorded the requested model identifier, raw model output, parsed fields, and verification status. The requested model identifiers used in our main experiments were \texttt{claude-opus-4-6-thinking}, \texttt{gemini-3.1-pro-preview-thinking}, \texttt{MiniMax-M2.5}, \texttt{Qwen/Qwen3.5-27B}, and \texttt{Qwen/Qwen3.5-9B}. Unless otherwise specified, we used temperature \(=1.0\), \(n=200\) trials per repeat, and 5 repeats per model-distribution condition.

For each DSC trial, the stored record includes the raw response text, original seed pair, shuffled seed pair, comparison bits, integer \(N\), normalized variate \(u\), parsed final sample, parsing validity, and stage-wise verification results. For downstream tasks, we similarly store the raw response, parsed task attributes, and parsed output labels. These logs allow us to audit both distributional outcomes and intermediate arithmetic failures.


An example DSC raw-output excerpt is shown below; full raw responses are retained in the experiment logs.

\begin{exercisebox}[Listing F.1: DSC example output from \texttt{gemini-3.1-pro-preview-thinking} (Normal Distribution)]
\begin{verbatim}
(Explanation)
<original_seed1>k8#Lp2!Qx9v$mC5b</original_seed1>\n
<original_seed2>A3z*9dF&1qY@w7uH</original_seed2>\n
<seed1>p$kCx2L#9m!bv8Q5</seed1>\n
<seed2>F7@z&Hw*Aqu13Y9d</seed2>\n
<bits>1010100000011010</bits>\n
<N>43034</N>\n
<u>0.656646728515625</u>\n
<z>0.403328</z>
\end{verbatim}
\end{exercisebox}

For a downstream application, the stored raw output includes the sampled comparison bits and selected answer; full raw responses are retained in the experiment logs:

\begin{exercisebox}[Listing F.2: MCQ downstream example from \texttt{Qwen3.5-9B}]
\begin{verbatim}
<seed1>K*m@7#Lx9n^2qZ&wP</seed1>\n
<seed2>j$n&4%oV#z2rX(vM</seed2>\n
<bits>0101100110101101</bits>\n
<bit0>0</bit0>\n
<bit8>1</bit8>\n
<correct_answer>B</correct_answer>\n
<question>A 65-year-old man with known coronary artery disease presents to the 
emergency department with crushing substernal chest pain at rest lasting 45 minutes. 
ECG reveals ST-segment elevation in leads II, III, and aVF. Which of the following 
interventions should be administered FIRST after aspirin has been given?
</question>\n
<option_a>Vancomycin intravenous infusion</option_a>\n
<option_b>Unfractionated heparin intravenous drip</option_b>\n
<option_c>Oral metoprolol</option_c>\n
<option_d>Intravenous nitroglycerin</option_d>\n
<explanation>...</explanation>
\end{verbatim}
\end{exercisebox}

\begin{exercisebox}[Listing F.3: Attribute-constrained downstream example from \texttt{Qwen3.5-9B}]
\label{lst:attr-log}
\begin{verbatim}
(Explanation)
<seed1a>xK9#mP2l$vrQ8wZ</seed1a>\n
<seed2a>3L7!nN1o%uqW6xE@</seed2a>\n
<seed1b>A#4%M^7wZ9qR2xT</seed1b>\n
<seed2b>B@5&N^8xY0tS3yU</seed2b>\n
<seed1c>C&6*N^9yZ1rT4zU</seed1c>\n
<seed2c>D*7-O^0zA2sV5aW</seed2c>\n
<seed1d>E(8/P^1aB3tW6bX</seed1d>\n
<seed2d>F)9/Q^2bC4uX7cY</seed2d>\n
<gender>Female</gender>\n
<race>Black</race>\n
<height>175</height>\n
<color>Blue</color>\n
<prompt>A portrait of a beautiful Black woman wearing a stylish midnight blue coat
with subtle teal panels, standing in soft natural light.</prompt>
\end{verbatim}
\end{exercisebox}

\end{appendices}
\end{document}